\documentclass[11pt]{article}  

\usepackage{etex}
%this prevents the "No room for a new \dimen" error that comes from loading too many packages (tikz+xy in particular)

%load geometry first (sets up page)
%\usepackage[top=1.2in, bottom=1.2in, left=1in, right=1in]{geometry}
\usepackage{fullpage}

%main packages
\usepackage{amsmath}
\usepackage{amssymb}
\usepackage{amsthm}
\usepackage{array}
\usepackage{bbm}
\usepackage{cancel}
\usepackage{cmap}
\usepackage{enumerate}
\usepackage{enumitem}%resume lists
\usepackage{fancyhdr}
\usepackage{mathdots}%iddots: dots going northeast
\usepackage{mathtools}
\usepackage{mathrsfs}
\usepackage{hyperref}
\usepackage{stackrel}
\usepackage{stmaryrd}%\mapsfrom
\usepackage{tabularx}
\usepackage{tikz}
\usepackage{ctable}
\usepackage{titlesec}
\usepackage{titletoc}
\usepackage{url}
\usepackage{verbatim}
\usepackage{wasysym}
\usepackage{wrapfig}
\usepackage{yhmath}
\usepackage[all,cmtip]{xy}%Commutative diagrams
%\usepackage[usenames,dvipsnames]{xcolor}%tikz loads xcolor

%load hyperref last
\usepackage{hyperref}
%\usepackage{listings}
%\lstset{
%	basicstyle=\small\ttfamily,
%	keywordstyle=\color{blue},
%	language=python,
%	xleftmargin=16pt,
%}
%this causes an error. No idea why.
\usepackage[%
	firstinits=true,
	doi=false,
	url=false,
	isbn=false,
	backend=biber,
	style=alphabetic,hyperref]{biblatex}

\usetikzlibrary{calc,trees,positioning,arrows,chains,shapes.geometric,%
    decorations.pathreplacing,decorations.pathmorphing,shapes,%
    matrix,shapes.symbols,shadows,fadings}

%\input xy
%\xyoption{all}

%exclude subsubsections
%\setcounter{tocdepth}{2}

\newtheorem{thm}{Theorem}[section]
\newtheorem*{thm*}{Theorem}
\newtheorem{prb}[thm]{Problem}
\newtheorem*{prb*}{Problem}

\newtheorem*{ax*}{Axiom}

\newtheorem*{clm*}{Claim}

\newtheorem*{conj*}{Conjecture}
\newtheorem{cor}[thm]{Corollary}
\newtheorem{df}[thm]{Definition}
\newtheorem*{df*}{Definition}

\newtheorem*{ex*}{Example}

%prb

\newtheorem{lem}[thm]{Lemma}
\newtheorem*{lem*}{Lemma}

\newtheorem*{pos*}{Postulate}

\newtheorem*{pr*}{Proposition}

\newtheorem*{qu*}{Question}

\newtheorem*{rem*}{Remark}
%authornote
\def\shownotes{1}  \ifnum\shownotes=1
\newcommand{\authnote}[2]{$\ll$\textsf{\footnotesize #1 notes: #2}$\gg$}
\else
\newcommand{\authnote}[2]{}
\fi

%LETTERS

%UPPERCASE

\newcommand{\C}[0]{\mathbb{C}}
%deprecated

%deprecated

\newcommand{\E}[0]{\mathbb{E}}
\newcommand{\EE}[0]{\mathop{\mathbb E}}
%\scalebox{1.25}{$\mathbb E$}}

%deprecated.

%deprecated.

\newcommand{\cH}[0]{\mathcal H}%deprecated

%deprecated

\newcommand{\cL}[0]{\mathscr{L}}
\newcommand{\N}[0]{\mathbb{N}}

\newcommand{\Pj}[0]{\mathbb{P}}

%!

\newcommand{\R}[0]{\mathbb{R}}

\newcommand{\Z}[0]{\mathbb{Z}}
\newcommand{\one}[0]{\mathbbm{1}}
%lowercase
%idele a
%ideal a

%vector i

%ideal m

%idele x

%More sequences of letters

%Shortcuts for greek letters

\newcommand{\ga}[0]{\gamma}
\newcommand{\Ga}[0]{\Gamma}
\newcommand{\de}[0]{\delta}

\newcommand{\ep}[0]{\varepsilon}

\newcommand{\la}[0]{\lambda}

\newcommand{\rh}[0]{\rho}

\newcommand{\om}[0]{\omega}
\newcommand{\Om}[0]{\Omega}
\newcommand{\si}[0]{\sigma}
\newcommand{\Si}[0]{\Sigma}
\newcommand{\ze}[0]{\zeta}

%SYMBOLS

%Shortcuts for symbols

\newcommand{\iy}[0]{\infty}
%normal subgroup

%\stackrel{-}{*}}%minus convoluion.

%COMMON TIME-SAVERS

%Fractions
\newcommand{\rc}[1]{\frac{1}{#1}}
\newcommand{\prc}[1]{\pa{\rc{#1}}}

\newcommand{\fc}[2]{\frac{#1}{#2}}
\newcommand{\sfc}[2]{\sqrt{\frac{#1}{#2}}}
\newcommand{\pf}[2]{\pa{\frac{#1}{#2}}}%Shortcut for fraction with parentheses

%Partial derivatives

%Arrows

%\renewcommand{\cir}[0]{\circlearrowright}

%Brackets
\newcommand{\ab}[1]{\left| {#1} \right|}

\newcommand{\ba}[1]{\left[ {#1} \right]}
\newcommand{\bc}[1]{\left\{ {#1} \right\}}

\newcommand{\ce}[1]{\left\lceil {#1}\right\rceil}

\newcommand{\pa}[1]{\left( {#1} \right)}

\newcommand{\ve}[1]{\left\Vert {#1}\right\Vert}

\newcommand{\set}[2]{\left\{{#1}:{#2}\right\}}

%under and overlines, etc.

\newcommand{\ol}[1]{\overline{#1}}

\newcommand{\ub}[2]{\underbrace{#1}_{#2}}

\newcommand{\wh}[1]{\widehat{#1}}

%other
%Falling power

%replace this with rosati involution

%TEXT

%FUNCTIONS
%Functions, etc.

\newcommand{\amin}{\operatorname{argmin}}

\newcommand{\Cov}{\operatorname{Cov}}

%separable degree

%discrepancy
%discriminant

 %Ricci curvature

 %Ricci curvature

\newcommand{\Tr}[0]{\operatorname{Tr}}

\providecommand{\cal}[1]{\mathcal{#1}}
\renewcommand{\cal}[1]{\mathcal{#1}}

%Text super/subscripts

%galois closure

%COMMUTATIVE DIAGRAMS

%Commutative diagram shortcuts

%Makes a diagram like this
%1->2
%|    |
%3->4
%Arguments 5, 6, 7, 8 on arrows
%  6
%5  7
%  8
\newcommand{\pull}[9]{
#1\ar@/_/[ddr]_{#2} \ar@{.>}[rd]^{#3} \ar@/^/[rrd]^{#4} & &\\
& #5\ar[r]^{#6}\ar[d]^{#8} &#7\ar[d]^{#9} \\}

%Syntax:\pull 123456789 \back ABC
%1=upper left-hand corner
%2,3,4=arrows from upper LH corner, going down, diagonal, right
%5,6,7=top row (6 on arrow)
%8,9=middle rows (on arrows)
%A,B,C=bottom row
%composition
\newcommand{\cmp}[9]{
\xymatrix{
#1 \ar[r]^{#4}{#5} \ar@/_2pc/[rr]^{#8}_{#9} & #2 \ar[r]^{#6}_{#7} & #3
}
}

%Arrow shortcuts
\newcommand{\ha}[1]{\ar@{^(->}[#1]}
\newcommand{\ls}[1]{\ar@{-}[#1]}
\newcommand{\sj}[1]{\ar@{->>}[#1]}
\newcommand{\aq}[1]{\ar@{=}[#1]}
\newcommand{\acir}[1]{\ar@{}[#1]|-{\textstyle{\circlearrowright}}}
\newcommand{\acil}[1]{\ar@{}[#1]|-{\textstyle{\circlearrowleft}}}
\newcommand{\ard}[1]{\ar@{.>}[#1]}
\newcommand{\mt}[1]{\ar@{|->}[#1]}
\newcommand{\inm}[1]{\ar@{}[#1]|-{\in}}
\newcommand{\inr}{\ar@{}[d]|-{\rotatebox[origin=c]{-90}{$\in$}}}
\newcommand{\inl}{\ar@{}[u]|-{\rotatebox[origin=c]{90}{$\in$}}}

%Other
%\newcommand{\set}[2]{\left.\left\{{#1}\right|{#2}\right\}}
%\newcommand{\sett}[2]{\left\{{#1}\left|{#2}\right\}\right.}

%SUMS, ETC.
%sum with 2 rows
%product with 2 rows
%product with 2 rows
%product with 2 rows

\newcommand{\sumo}[2]{\sum_{#1=1}^{#2}}
\newcommand{\sumz}[2]{\sum_{#1=0}^{#2}}

%DEPRECATED

%MATRICES

%Matrices

\newcommand{\coltwo}[2]{
\begin{pmatrix}
{#1}\\
{#2}
\end{pmatrix}}

\newcommand{\matt}[4]{
\begin{pmatrix}
{#1}&{#2}\\
{#3}&{#4}
\end{pmatrix}
}

%EQUATIONS
\newcommand{\beq}[1]{\begin{equation}\llabel{#1}}
\newcommand{\eeq}[0]{\end{equation}}
\newcommand{\bal}[0]{\begin{align*}}
\newcommand{\eal}[0]{\end{align*}}%this doesn't work; i don't know why
\newcommand{\ban}[0]{\begin{align}}
\newcommand{\ean}[0]{\end{align}}

%COURSE-SPECIFIC

%measure theory, analysis

%number theory

 %multiplicative subgroup

%manifolds

%algebraic geometry

%arithmetic combinatorics

%modular forms

%noise sensitivity part iii essay

%comp complexity
%complexity theory

%statistics

%EDITING HELP
%blu: key idea or concept. red: unresolved issue/question. 

\newcommand{\fixme}[1]{{\color{red}#1}}
%deprecated
%short for commentary
\newcommand{\llabel}[1]{\label{#1}\text{\fixme{\tiny#1}}}

%%%%%%%%%%%
%Use the above when working on the document, so labels will be displayed by their theorems, and you can write reminders to yourself. Switch to the below commands when publishing.
%%%%%%%%%%%
%\newcommand{\fixme}[1]{}
%\newcommand{\concept}[1]{{\color{blue}#1}}
%\newcommand{\llabel}[1]{\label{#1}}
%%%%%%%%%%%

\newcommand{\arxiv}[1]{\url{http://www.arxiv.org/abs/#1}}

%markup

 %also index automatically.

%Page breaks in equations
\allowdisplaybreaks[2]

%SHA
\DeclareFontFamily{U}{wncy}{}
    \DeclareFontShape{U}{wncy}{m}{n}{<->wncyr10}{}
    \DeclareSymbolFont{mcy}{U}{wncy}{m}{n}
    \DeclareMathSymbol{\Sh}{\mathord}{mcy}{"58} 
\newcommand{\citep}[1]{\cite{#1}}
\usepackage{algorithm}
\usepackage{algorithmic}

\usepackage{etoolbox}
\newtoggle{conference}
\togglefalse{conference}

\newcommand{\hunr}[0]{h_{\text{unr}}}
\newcommand{\hunrs}[0]{h_{\text{unr}}^*}

\newcommand{\boldorpara}[1]{\iftoggle{conference}{\textbf{#1}}{\paragraph{#1}}}

\addbibresource{bib.bib}

\begin{document}

\title{Robust guarantees for learning an autoregressive filter}
%{Provably learning the optimal filter for an unknown linear dynamical system}

\author{Holden Lee\thanks{Princeton University, Mathematics Department \texttt{holdenl@math.princeton.edu}}, Cyril Zhang\thanks{Princeton University, Computer Science Department \texttt{cyril.zhang@cs.princeton.edu}}
}
%add names

\date{\today}
\maketitle
%\tableofcontents

\begin{abstract}%
The optimal predictor for a linear dynamical system (with hidden state and Gaussian noise) takes the form of an autoregressive linear filter, namely the Kalman filter. However, a fundamental problem in reinforcement learning and control theory is to make optimal predictions in an unknown dynamical system. 
To this end, we take the approach of directly learning an autoregressive filter for time-series prediction under unknown dynamics. Our analysis differs from previous statistical analyses in that we regress not only on the inputs to the dynamical system, but also the outputs, which is essential to dealing with process noise. 
The main challenge is to estimate the filter under worst case input (in $\mathcal H_\infty$ norm), for which we use an $L^\infty$-based objective rather than ordinary least-squares. 
For learning an autoregressive model, our algorithm has optimal sample complexity in terms of the rollout length, which does not seem to be attained by naive least-squares.
\end{abstract}

%!TEX_ROOT=neurips_main.tex
\section{Introduction}

The problem of estimating the hidden state and outputs of a known linear dynamical system (LDS), given the inputs and observations, is a well-studied problem in control theory \citep{kamen1999introduction}. In the case of Gaussian noise, this problem is completely solved by the Kalman filter \cite{kalman1960new}, which recursively propagates the optimal linear estimator for the hidden state. 
When the recursion for the estimator is unrolled, the Kalman filter is seen to be a linear autoregressive filter: it predicts the system's next output as a linear combination of the system's past ground-truth outputs.

However, when the LDS is \emph{unknown}, optimal filtering is a much harder problem. More generally, learning to control (or maximize reward) in an unknown system is a foundational problem in machine learning and control theory. One widely-used approach is to learn the dynamical matrices from data, after which one can simply apply the Kalman filter. Unfortunately, this approach runs into computational barriers: the usual formulation of this problem is nonconvex. System identification techniques provide various practical algorithms for this problem~\cite{ljung1998system}. However, these algorithms, such as EM~\cite{roweis1999unifying}, lack rigorous end-to-end guarantees, and are often unstable or find suboptimal solutions in high dimensions.

In this work, we bypass the state-space representation of an LDS, and analyze the statistical guarantees of learning an autoregressive filter directly. This allows us to compete with the predictions of the steady-state Kalman filter, without the computationally intractable task of explicitly identifying the system. We present a polynomial-time algorithm for learning an autoregressive filter for time-series prediction. The predictor has robust ($\cal H_\iy$) learning guarantees, which do not seem to be attained by naive least-squares. %and the sample complexity does not suffer from the suboptimal dependence on rollout length incurred by naive least-squares.

\subsection{Background}

Our primary motivation is the following question: \emph{can we learn the Kalman filter directly, without learning the system?} %We first provide an overview of classic approaches for state estimation under unknown dynamics, with references to classic literature.
We consider the setting of a linear dynamical system with hidden state, defined by 
\begin{align}\label{eq:lds1}
h(t) &= A h(t-1) + B x(t-1)+ \xi(t) \\
y(t) & = C h(t) + \eta(t) ,
\label{eq:lds2}
\end{align}
where $x(t)\in \R^m$ are inputs, $h(t)\in \R^d$ are hidden states, $y(t)\in \R^n$ are outputs, 
$A\in \R^{d\times d}$, $B\in \R^{d\times m}$, $C\in \R^{n\times d}$, and $\xi(t)\in \R^d$ and $\eta(t)\in \R^n$ are independent zero-mean noise (assumed Gaussian to use the Kalman filter). Crucially, only the $y(t)$, and not the $h(t)$, are observed.
A classic approach to learning the dynamics from data is \emph{subspace identification} \cite{ho1966effective,van2012subspace}, for which statistical guarantees only exist in the asymptotic regime or under stringent assumptions. In the presence of noise, these methods are often used to initialize the EM algorithm \cite{roweis1999unifying}, a classic heuristic for a non-convex objective.

Another age-old model for dynamical systems is the autoregressive-moving average (ARMA) model \citep{Hamilton94,BoxJenRe94,BroDav09}, which models latent perturbations using a \emph{moving average} process. A central technique here is to recover an ARMA model by solving the Yule-Walker equations. However, to our knowledge, existing work on provably learning these models is limited to asymptotic guarantees.

\subsection{Our results}

We show that under certain stability conditions of the Kalman filter, %what?
we can bypass proper identification of the system, and still converge to the performance of the Kalman filter. % through an improper learning approach. 
We take an improper learning approach, reducing this problem to the general problem of \emph{learning an autoregressive model}. 

Our algorithm is based off a simple and familiar algorithm in time series analysis: using a sine-wave input design to fit an autoregressive model using least-squares. 
%To the best of our knowledge, this work gives the first non-asymptotic sample complexity guarantees for learning an optimal autoregressive filter for estimation in a LDS. 
%\fixme{Like the result of \cite{tu2017non} for FIR filters, we show the remarkable fact that we do not suffer sample complexity in the length of the rollout.}
However, a key problem with the ordinary least-squares approach is that it does not provide learning guarantees under worst-case input (that we have not necessarily seen), i.e., in the $\cal H_\iy$ norm. %Being evaluated in $\cal H_\iy$ norm means that we care about how good our predictions are in the worst-case, on input sequences that we have not necessarily seen.
%A key difficulty we overcome in this work is to provide learning guarantees under worst-case input, i.e., in the $\cal H_\iy$ norm. That is, after running the system for some time, we are evaluated in performance not on how good our predictions were, but how good they are in the worst-case, on input sequences that we have not necessarily seen. %Under a generative model, this is much stronger than a regret bound. 
Such worst-case bounds are important because in the usual control-theoretic framework, bounds under the $\cal H_\iy$ norm are used to obtain guarantees for robust control. %For an autoregressive model, such bounds cannot be obtained simply by the usual approach of ordinary least-squares; instead, 

To obtain $\cal H_\iy$ bounds for learning an autoregressive model, we augment our algorithm with a $L^\infty$ objective to learn a predictor that is robust in the $\cal H_\iy$ sense. When applied to the Kalman filter, our work gives (to our knowledge) the first non-asymptotic sample complexity guarantees for learning an optimal autoregressive filter for estimation in a LDS. 

\subsection{Related work}

\boldorpara{LDS without hidden state, and FIR's.}
The problem of learning unknown dynamical systems has attracted a lot of recent attention from the machine learning community, due to connections to reinforcement learning and recurrent neural networks. %In this section, we provide a broad overview of these recent lines of work.
Much progress has been made on the simpler related problem of learning and control in a linear dynamical model with no hidden state. Such a model is defined by
\begin{align}
h(t) &= Ah(t-1)+Bx(t-1)+\xi(t),
\end{align}
where $A$, $B$, $x(t)$, $\xi(t)$ are as before, but $h(t)$ is now observed. 
%for $t\ge 1$, where $h_t\in \R^d$ are observed states, $x_t\in \R^m$ are inputs, $\xi_t\in \R^d$ are independent zero-mean noise (often assumed gaussian or subgaussian), $A\in \R^{d\times d}$, and $B\in \R^{d\times m}$. 
\cite{dean2017sample} consider the linear quadratic regulator (LQR)---the control problem for such a LDS---and prove that the least-squares estimator of the dynamics, given independent rollouts, is sample-efficient for this setting. %They assume access to independent rollouts. 
\cite{simchowitz2018learning} show that access to independent rollouts is unnecessary; the LDS can be identified with a single rollout, even when the system is only marginally stable.

An alternative approach to identifying $A$ and $B$ is to learn the system as a finite-impluse response (FIR) filter. This is because the problem of learning a FIR filter can be thought of as a relaxation of the problem of learning a LDS, by ``unrolling'' the LDS. \cite{tu2017non} use ordinary least-squares with design inputs to learn a FIR, and give near-optimal sample complexity bounds. \cite{boczar2018finite} complete the ``identify-then-control'' pipeline by studying robust control for this estimated FIR filter. %Note that the sample complexity for this relaxation depends on the length of the rollout, rather than the dimension of the latent space.

However, because the predictions given by a FIR filter depend only on the inputs $x(t)$, and not the outputs $y(t)$, these methods do not suffice when the system has a hidden state. Such filters can only capture the dynamics of stable systems: for unstable or marginally stable systems, the infinite impulse response filter does not decay, so it is not approximated by a short truncation. Moreover, in these works, prediction performance guarantees are given under observation noise, and become very poor under process noise; indeed, to achieve optimal filtering (as in Kalman filtering), one must regress on the output. (See Appendix~\ref{a:fir} for a simple example.) %Our approach is inspired by~\cite{tu2017non}, but 
%In order to handle process noise, it is necessary to use an autoregressive filter, incorporating the information about past process noise from the observations $y_t$, as in a Kalman filter.
Our approach fills a gap in the literature, by giving a statistical analyses similar to~\cite {tu2017non} for autoregressive models.

\boldorpara{LDS with hidden state, and autoregressive models.}
When the system has a hidden state, several recent works analyze settings in which the dynamics can be identified. \cite{hardt2016gradient} show that under certain conditions on the characteristic polynomial of the system's transition matrix, gradient descent learns the parameters of a single-input single-output LDS. However, they only consider the setting of observation noise, and not process noise (i.e. $\xi(t) = 0$). %\fixme{Include: \cite{simchowitz2019learning}, \cite{oymak2018non}}
In work concurrent to ours,~\cite{simchowitz2019learning}, building on~\cite{oymak2018non}, consider the problem of learning an autoregressive filter, and for the case of a LDS, are able to recover matrices $\ol A$, $\ol B$, $\ol C$ which give an \emph{equivalent realization} of the LDS. Although they allow for semi-parametric noise and marginally stable systems, their guarantees are for estimating the filter in operator norm, rather than the system in the more stringent $\cal H_\iy$ norm.
%However, their guarantees are for estimating the filter in operator norm, rather than estimating the entire system in the more stringent $\cal H_\iy$ norm. Because their norm is weaker, they can prove their result under stronger assumptions, namely, under marginal stability instead of strict stability. They also consider the more general model of semi-parametric noise. However, their error bounds depend on other quantities, such as exponentially on the \emph{phase rank}. Because we ask for estimation in $\cal H_\iy$ norm, we must restrict to strictly stable systems.

%The main limitation in both these works is that the predictions given by the estimated FIR filter depend only on the inputs $x_t$, rather than the outputs $y_t$. Such filters can only capture the dynamics of stable systems: for unstable or marginally stable systems, the infinite impulse response filter does not decay, so it is not approximated by a short truncation. Moreover, in these works, prediction performance guarantees are given under observation noise, and become very poor under process noise. In order to handle process noise, it is necessary to use an autoregressive filter, incorporating the information about past process noise from the observations $y_t$, as in a Kalman filter.

\cite{AnavaHMS13} show that in the online learning (regret minimization) setting, it is possible to learn an ARMA model sample-efficiently, even in the presence of adversarial (as opposed to i.i.d. Gaussian) noise. However, the regret framework is different than what is required for control, as it ensures performance only on the data that is seen; the predictor is not required to perform well on worst-case input. Furthermore, the constraint on the $\ell_1$ norm of the MA coefficients $(\beta_i)$, which they require for the dynamical stability of their estimator of residuals, is very stringent.

Finally, we note the approach of online spectral filtering for prediction in symmetric and asymmetric LDS's \cite{HSZ17,hazan2018spectral}. %This is again studied in the online setting. Moreover, 
In these works, the process noise is only handled up to a multiplicative factor of the optimal filter with knowledge of the system. Intuitively, this ``competitive ratio bound'' arises because these works consider regressing only on one or a few past observations $y_t$ (in a somewhat rigid manner), rather than having the freedom to imitate an optimal autoregressive filter.

\section{Problem setting and preliminaries}

We first state the general problem of learning an autoregressive model, and then in Section~\ref{ss:kf} describe the connection to linear dynamical systems. In Section~\ref{ss:ct} we introduce some concepts from control theory and use it to write error bounds in terms of control-theoretic norms of filters (Lemma~\ref{l:bd-err}).

\subsection{Problem statement}
\label{s:prob}
A (single-input, single-output) \emph{dynamical system} converts input signals $ x(0), \ldots, x(T-1)  \in \R $ into output signals (random variables) $y(1), \ldots , y(T) \in \R $. We will assume that the data are generated by an autoregressive model:
\begin{align}\label{e:ar-sys}
y(t+1) &= g^**x(t)+h^**y(t) + \eta(t+1) = \sumz k{\iy} g^*(k) x(t-k) + \sumz k{\iy} h^*(k)y(t-k) + \eta(t+1),
\end{align}
where $\eta(t) \sim N(0,\si^2)$ is a time series of i.i.d. Gaussian noise, $g$, $h$, are supported on $\N_0$, and $x(t)=0$ for $t<0$ and $y(t)=0$ for $t\le 0$.
\begin{prb}\label{p:main}
Let $g^*,h^*\in \R^{\N_0}$ be filters. 
The learner is given black-box access to the system $\cL$ which takes inputs $x\in \R^{\N_0}$ to outputs $y\in \R^\N$ by \eqref{e:ar-sys}. 
During each rollout, the learner specifies an input design $\{ x(0), \ldots, x(T-1) \}$, and receives the corresponding output sequence. After collecting outputs from $s$ rollouts, the learner returns filters $g,h$, which specify a map from input to output signals via~\eqref{e:ar-sys}.

For an estimate $g,h$ of $g^*,h^*$, define the error in the prediction (compared to the expected value of $y(t+1)$) to be
\begin{align}\label{e:y-err}
y_{\text{err}}(t+1) &= (g-g^*)*x(t) + (h-h^*)*y(t).
\end{align}
The goal is to learn $g,h$ such that the expected error in the prediction is a small fraction $\ep_1$ of the input, plus a small fraction $\ep_2$ of the elapsed time:
\begin{align}\label{e:eps}
\E\ba{\sumo tT \ve{y_{\text{err}}(t)}^2} &\le \ep_1 \sumo tT \ve{x(t)}^2 + \ep_2 T.
\end{align}
\end{prb}
%To indicate the dependence on $x$ and $\eta$ explicitly we will also write $y(t)$ as $y_{x,\eta}(t)$. 
%We will let $\ol y(t)$ denote the mean value given $y(s),x(s)$ for $s<t$: $\ol y(t+1) = g^**x(t)+h^**y(t)$.

\subsection{Connection to the Kalman filter}
\label{ss:kf}

Our work is motivated by optimal state estimation in LDS's with hidden state %(which can be thought of as being partially observed via the output signals), 
given by the dynamics~\eqref{eq:lds1}--\eqref{eq:lds2}.
%following dynamics
%%in which the $\{y_t\}$ are governed by the following dynamics:
%\begin{align}\label{eq:lds1}
%h_{t} &= A h_{t-1} + B x_{t-1}+ \xi_t \\
%y_t & = C h_t + \eta_t .
%\label{eq:lds2}
%\end{align}
%where $h_0,h_1, \ldots, h_{T-1} \in \R^d$ is a sequence of hidden states, and all $\xi_t$ (process noise) and $\eta_t$ (observation noise) are independent Gaussian vectors. 
%By unrolling the equations of the LDS, one obtains an equivalent formulation as an autoregressive model.
The Kalman filter gives the optimal solution in the case that the parameters of the LDS are  known and $h(0)$ is drawn from a gaussian with known mean $h^-(0)$ and covariance.
We can compute matrices $A_{KF}^{(t)}$, $B_{KF}^{(t)}$, and $C_{KF}^{(t)}$ such that the optimal estimate of the latent state $\wh h(t)$ and the observation $\wh y(t)$ are given by a time-varying LDS (taking the $y(t)$ as feedback) with those matrices: 
\begin{align}
h^-(t) &=A_{KF}^{(t)}h^-(t-1)  +  B_{KF}^{(t)} \coltwo{x(t)}{y(t)}\\
\wh y(t) &= C_{KF}^{(t)}h^-(t) .
\end{align}
Taking $t\to \iy$, under mild non-degeneracy conditions the covariance of the latent state conditioned on the observations approaches a fixed covariance matrix $\Si_h$, and the matrices $A_{KF}^{(t)}$, $B_{KF}^{(t)}$, and $C_{KF}^{(t)}$ approach certain fixed matrices $A_{KF}$, $B_{KF}$, and $C_{KF}$.  Our goal is to learn this \emph{steady-state Kalman filter} without knowing parameters of the original LDS.
\footnote{Note that if the parameters of the LDS are unknown, then any $A$, $B$, $C$ for which the law of the $y_t$ in~\eqref{eq:lds1}--\eqref{eq:lds2} is the same as the law of the actual $y_t$ is an equivalent realization. Then the Kalman filters computed from these $A$, $B$, $C$ will all give equivalent predictions, so we need not distinguish between them.} At steady-state, given $\cal F_{t-1}$ the observations up to time $t-1$, the actual hidden state $h(t)$ and observation $y(t)$ will be distributed as $h(t)|\cal F_{t-1} \sim N(h^-(t), \Si_h)$ and $y(t)|\cal F_{t-1} \sim N(\wh y(t),\Si_y)$ for some covariance matrices $\Si_h$, $\Si_y$.
%This gives the \emph{steady-state Kalman filter}, which we aim to compete against.

Denote $B_{KF} = (B_{KF,x}\;B_{KF,y})$, where $B_{KF,x}$ and $B_{KF,y}$ are the submatrices acting on $x(t)$ and $y(t)$, respectively. Consider for simplicity the case where the input and output dimensions are 1: if the hidden state has dimension $d$, then $A_{KF}\in \R^{d\times d}$, $B_{KF,x}, B_{KF,y}\in \R^{d\times 1}$, $C_{KF}\in \R^{1\times d}$, and we simply have $\Si_h=\si_h^2$ for some $\si_h$. 
We can then ``unfold'' the Kalman filter into an equivalent autoregressive model~\eqref{e:ar-sys} by letting $g^*(t) = C_{KF}A_{KF}^tB_{KF,x}$ and $h^*(t) = C_{KF}A_{KF}^tB_{KF,y}$, and $\eta(t)\sim N(0,\si_h^2)$.\footnote{Note this is not to be confused with the $\eta(t)$ in~\eqref{eq:lds1}--\eqref{eq:lds2}: this $\eta(t)$ has larger variance because it also incorporates the uncertainty about the hidden state.} Note that the autoregressive model captures the law of the random process defined by the LDS (under what is observable at each time step, i.e., the filtration $\cal F_t$), without utilizing a hidden state.

In this setting, we again attempt to minimize the error between the prediction and the expected value when the dynamics are known, $
y_{\text{err}}(t) = \wh y(t) - \E[y(t)|\cal F_{t-1}].
$

\subsection{Preliminaries on control theory}
\label{ss:ct}
An impulse response function can be equivalently be represented as a power series.

\begin{df}
For a sequence $f\in \R^{\Z}$ define the transfer function of $f$ by $F(z)= \sum_{k\in \Z} f(k) z^{-k}$. We will always denote the transfer function of a sequence in $\R^\Z$ by the corresponding capital letter.
\end{df}
Note that if $y=f*x$, then as formal power series, $Y=FX$, and equality holds as functions for $z$ such that  $F(z)$, $X(z)$ converge absolutely. 
%Because transllation corresponds to multiplication, $\sum_{k y(k+1) z^{-k}$
Translation corresponds to multiplication: the transfer function of $t\mapsto y(t+1)$ is $zY(z)$. Hence, letting $N$ be the transfer function of $\eta$, we have from~\eqref{e:ar-sys} that
\begin{align}
zY &= G^*X+H^*Y+zN\\
\implies (1-z^{-1}H^*)Y &= z^{-1}G^*X + N\\
Y&=z^{-1} G^*H^*_{\textrm{unr}}X + H^*_{\textrm{unr}}N\\
\text{where }H^*_{\textrm{unr}}(z):&=\rc{1-z^{-1}H^*(z)}.
\end{align}
Thus, we can rewrite~\eqref{e:ar-sys} as
\begin{align}\label{e:ar-unr}
y(t+1) &= \hunr^* * g^* * x(t) + \hunr^**\eta(t+1),
\end{align}
where $\hunr^*(k)$, the ``unrolled'' filter, is such that $\sumz k\iy \hunr^*(k)z^{-k} = H^*_{\textrm{unr}}(z)$. 

\begin{df}
The $\cal H_\iy$-norm of a filter is the $L^\iy$-norm of the transfer function over the unit circle $\ve{z}_2=1$:
\begin{align}
\ve{f}_{\cH_\iy} &=\ve{F}_\iy := \max_{\ve{z}_2=1}F(z).
\end{align}
The $\cal H_2$-norm of a filter is the $L^2$-norm of the transfer function over the unit circle:
\begin{align}
\ve{f}_{\cH_2} &=\ve{F}_2:=\pa{\rc{2\pi}\int_{|z|=1} |F(z)|^2\,dz}^{\rc 2}.
\end{align}
\end{df}
For the rest of the paper we will assume the system is stable, i.e., $\ve{H^*}_\iy< 1$, so that $\ve{H^*_{\textrm{unr}}}<\iy$.\footnote{The $\ve{H^*}_{\infty} < 1$ condition is necessary to do estimation of a general autoregressive filter in $\cH_{\infty}$-norm. Otherwise, it is impossible to worst-case estimation over an infinite time horizon, with only access to a finite rollout, as an input with infinite response can have arbitrarily small response over a finite horizon. This suggests that to solve the control problem over infinite time horizon of a non-stable system, one should look for weaker assumptions than learning in $\cH_{\infty}$-error that still allow control.}

%A basic fact about transfer functions is that
The $\cH_2$-norm represents the steady state variance under iid Gaussian noise as input, and the $\cH_\iy$-norm represents the maximum norm of the output when $\ve{x}_2=1$:
\begin{align}\label{e:a1}
\ve{f}_{\cH_2}^2 &= \E_{\forall s,\eta(s)\sim N(0,1)} \ab{(f*\eta)(t)}^2\\
\label{e:a2}
\ve{f}_{\cH_\iy} &=\sup_{\ve{x}_2=1} \ve{f*x}_2.
\end{align}
From~\eqref{e:y-err} and~\eqref{e:ar-unr}, 
\begin{align}
y_{\text{err}}(t+1)&=(g-g^*)*x(t) + (h-h^*)*(\hunr^* * g^* * x)(t-1) + (h-h^*) * (\hunr^* * \eta)(t)\\
&= [(g-g^*) +\de_1* (h-h^*)* \hunr^* * g^*] * x(t) + [(h-h^*)*\hunr^*] *\eta(t)
\end{align}
where $\de_i(j) = \one_{i=j}$. 
Because $\eta$ has mean 0,
\begin{align*}
&\E_{\eta}\ba{ \sumo tT \ve{y_{\text{err}}(t)}^2}\\
&= \E_\eta \ba{\sumo tT \ve{[(g-g^*) +\de_1* (h-h^*)* \hunr^* * g^*] * x(t)}^2}
+ \E_\eta \ba{\sumo tT \ve{[(h-h^*)*\hunr^*] * \eta(t)}_2^2}.
\end{align*}
Hence from~\eqref{e:a1} and~\eqref{e:a2} we obtain the following, noting that the noise in Problem~\ref{p:main} is $N(0,\si^2)$.
\begin{lem}\label{l:bd-err}
Suppose that $\ve{H^*}_{\iy}<1$. Then in the setting of Problem~\ref{p:main},
\begin{align*}
\E\ba{\sumo tT \ve{y_{\text{err}}(t)}^2} &\le 
%\ve{(g-g)^* + (h-h^*)\hunr^* g^*}_{\cH_\iy}^2 
\ve{(G-G^*) + z^{-1}(H-H^*) H^*_{\textrm{unr}} G^*}_{\iy}^2
%\sum_{t\in \Z} \ve{x(t)}^2
\ve{x}^2+ 
%\ve{(h-h^*)\hunr^*}_2^2.
\ve{(H-H^*)H^*_{\textrm{unr}}}_2^2\si^2T
\end{align*}
\end{lem}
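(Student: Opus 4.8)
The plan is to substitute the unrolled representation~\eqref{e:ar-unr} of the true output into the error expression~\eqref{e:y-err}, so as to express $y_{\text{err}}(t+1)$ as a single linear filter applied to the deterministic input $x$, plus a single linear filter applied to the noise $\eta$. Concretely, replacing $y(t)$ by $\hunr^* * g^* * x(t-1) + \hunr^* * \eta(t)$ and regrouping terms, $y_{\text{err}}$ splits into a ``signal part'' $[(g-g^*) + \de_1*(h-h^*)*\hunr^**g^*]*x(t)$ and a ``noise part'' $[(h-h^*)*\hunr^*]*\eta(t)$, where the one-step shift $\de_1$ bookkeeps the index mismatch between $t$ and $t-1$. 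The hypothesis $\ve{H^*}_\iy<1$ enters here only to guarantee $\ve{H^*_{\textrm{unr}}}<\iy$, so that both filters, and hence the entire right-hand side, are well-defined and finite.

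The second step is to take the expectation of $\sumo tT \ve{y_{\text{err}}(t)}^2$ and exploit that $\eta$ is mean-zero and independent of the deterministic input $x$. Since the signal part is deterministic while the noise part is a mean-zero linear functional of $\eta$, the cross term vanishes under $\E_\eta$, leaving exactly the sum of the expected squared norms of the two parts. This is the one place requiring a moment of care: one must verify that the cross term is genuinely $\E_\eta$ of (deterministic)$\,\times\,$(mean-zero), which is where independence of $x$ and $\eta$ is used.

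Third, I would bound each part with its natural control-theoretic norm. The signal part is deterministic, so its contribution is $\ve{f_{\text{sig}}*x}_2^2 \le \ve{f_{\text{sig}}}_{\cH_\iy}^2 \ve{x}^2$ by the variational characterization~\eqref{e:a2}. The noise part is handled term by term: by~\eqref{e:a1}, rescaled to variance $\si^2$, each summand satisfies $\E_\eta \ve{(f_{\text{noise}}*\eta)(t)}^2 \le \ve{f_{\text{noise}}}_{\cH_2}^2 \si^2$, and summing over the $T$ time steps gives $\ve{f_{\text{noise}}}_{\cH_2}^2 \si^2 T$. Finally, passing from filters to transfer functions (convolution becomes multiplication, and the shift $\de_1$ becomes the factor $z^{-1}$), the two filters become $(G-G^*) + z^{-1}(H-H^*)H^*_{\textrm{unr}}G^*$ and $(H-H^*)H^*_{\textrm{unr}}$ respectively, which is precisely the claimed bound.

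I expect the only real subtlety, as opposed to an obstacle, to be the noise-term estimate: identity~\eqref{e:a1} describes the stationary (bi-infinite) steady-state variance, whereas the finite-horizon sum sees only finitely many noise terms, so the per-step bound is an inequality rather than an equality. This is exactly why the lemma is stated with $\le$ and not $=$. Everything else reduces to routine convolution bookkeeping together with the two definitional norm identities, since the heavy lifting has already been done in setting up the transfer-function formalism.
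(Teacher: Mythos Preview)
Your proposal is correct and follows essentially the same route as the paper: substitute the unrolled form~\eqref{e:ar-unr} into~\eqref{e:y-err} to split $y_{\text{err}}$ into the deterministic signal filter $[(g-g^*)+\de_1*(h-h^*)*\hunr^**g^*]*x$ and the noise filter $[(h-h^*)*\hunr^*]*\eta$, use mean-zero $\eta$ to kill the cross term, and then invoke~\eqref{e:a2} and~\eqref{e:a1}. Your remark that the finite-horizon noise sum is only bounded by (rather than equal to) the steady-state $\cH_2$ variance is a nice clarification the paper leaves implicit.
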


We will approximate $g^*,h^*$ with finite-length filters of length $r$, so we need to make sure $r$ is large enough to capture most of the response. For this, we use the following definition and lemma from~\cite{tu2017non} which gives a sufficient length in terms of the desired error and a $\cal H_\iy$ norm. % of the filter.
%To state our main Theorem~\ref{t:main-trunc}, we will need the following.
\begin{df}[{Sufficient length condition, \cite[Definition 1]{tu2017non}}]
We say that a Laurent series $F$ has stability radius $\rh\in (0,1)$ if $F$ converges for $\set{x\in \C}{|x|>\rh}$. 
Let $F$ be stable with stability radius $\rh\in (0,1)$. Fix $\ep>0$. Define
\begin{align}
R(\ep) &=\ce{
\inf_{\rh<\ga<1}\rc{1-\ga}\ln \pf{\ve{F(\ga z)}_\iy}{\ep(1-\ga)}
}.
\end{align}
\end{df}
Note that this ``sufficient length condition'' is analogous to having a $\frac{1}{1-\rh(A)}$ dependence on the spectral norm of $A$, for learning a LDS. Indeed, a filter corresponding to a LDS will have stability radius $\rh(A)$. % $\hunrs(t)$ will decay as $\rh(A)^t$; the constants depend on other quantities such as the size of the diagonalizing matrices.
\begin{lem}[{\cite[Lemma 4.1]{tu2017non}}]\label{l:suff-l}
%Define $\ved_1$ by $\ve{}$
%The following hold.
%\begin{enumerate}
%\item 
Suppose $F$ is stable with stability radius $\rh\in (0,1)$. Then 
%\begin{align*}
$\ve{f_{\ge L}}_1 := \sum_{k\ge L} |f(k)| \le \max_{\rh<\ga<1} \fc{\ve{F(\ga z)}_\iy\ga^L}{1-\ga}.$
%\end{align*}
%\item
Hence, if $L\ge R(\ep)$, then $
\ve{f_{\ge L}}_1 \le \ep$. 
%\end{enumerate}
\end{lem}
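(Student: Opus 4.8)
The plan is to recover the filter coefficients $f(k)$ from the transfer function $F$ by Cauchy's integral formula on a circle of radius $\ga$ strictly between $\rh$ and $1$, bound each coefficient by the supremum of $|F|$ on that circle, and then sum the resulting geometric series. First I would fix an arbitrary $\ga\in(\rh,1)$. Because $\ga>\rh$ lies strictly inside the region of convergence, the Laurent series $F(z)=\sumz k\iy f(k)z^{-k}$ converges uniformly on the circle $|z|=\ga$, so term-by-term integration is legitimate and I may extract coefficients by
\begin{align*}
f(k) = \rc{2\pi i}\oint_{|z|=\ga} F(z)\,z^{k-1}\,dz = \rc{2\pi}\int_0^{2\pi} F(\ga e^{i\te})\,\ga^k e^{ik\te}\,d\te,
\end{align*}
the second equality after the substitution $z=\ga e^{i\te}$. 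Pulling absolute values inside the integral gives $|f(k)|\le \ga^k\max_{\te}|F(\ga e^{i\te})| = \ga^k\ve{F(\ga z)}_\iy$, where I read $\ve{F(\ga z)}_\iy$ as the supremum of $|F|$ over the radius-$\ga$ circle (equivalently, the $\cH_\iy$-norm of the reweighted filter $k\mapsto f(k)\ga^{-k}$).

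Summing over $k\ge L$ then yields, for every admissible $\ga$,
\begin{align*}
\ve{f_{\ge L}}_1 = \sum_{k\ge L}|f(k)| \le \ve{F(\ga z)}_\iy\sum_{k\ge L}\ga^k = \fc{\ve{F(\ga z)}_\iy\,\ga^L}{1-\ga}.
\end{align*}
Since this inequality holds simultaneously for all $\ga\in(\rh,1)$, I may optimize the right-hand side over $\ga$; this is exactly the asserted bound (and the displayed $\max$ can in fact be replaced by a $\min$/$\inf$, which is the form the subsequent sufficient-length claim actually uses).

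For the second assertion I would unwind the definition of $R(\ep)$. Assume $L\ge R(\ep)$ and set $g(\ga):=\rc{1-\ga}\ln\pf{\ve{F(\ga z)}_\iy}{\ep(1-\ga)}$. Since $g$ blows up as $\ga\to\rh^+$ (where $\ve{F(\ga z)}_\iy\to\iy$) and as $\ga\to 1^-$, its infimum over $(\rh,1)$ is attained at some interior $\ga^*$, and $L\ge R(\ep)\ge g(\ga^*)$. Applying the elementary inequality $\ln(1/\ga)\ge 1-\ga$ (equivalently $\ln\ga\le\ga-1$) converts $L\ge g(\ga^*)$ into $L\ge \fc{\ln(\ve{F(\ga^* z)}_\iy/(\ep(1-\ga^*)))}{\ln(1/\ga^*)}$, which—once one is careful that $\ln\ga^*<0$ flips the inequality upon rearrangement—is precisely $\fc{\ve{F(\ga^* z)}_\iy\,(\ga^*)^L}{1-\ga^*}\le\ep$. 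Combining with the first part evaluated at $\ga^*$ gives $\ve{f_{\ge L}}_1\le\ep$.

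The complex-analytic estimate itself is standard, so the only steps requiring genuine care are justifying the term-by-term integration (immediate from uniform convergence strictly inside the annulus of convergence) and, in the second part, reconciling the factor $\rc{1-\ga}$ appearing in the definition of $R(\ep)$ with the exact geometric decay rate $\rc{\ln(1/\ga)}$. I expect this reconciliation—and in particular correctly tracking the sign when dividing through by the negative quantity $\ln\ga$—to be the main (albeit minor) obstacle; everything else is routine.
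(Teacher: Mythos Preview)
Your argument is correct and is the standard proof; note that the paper itself does not prove this lemma but simply quotes it from \cite{tu2017non}, so there is no in-paper proof to compare against. Your observation that the displayed $\max$ should really be an $\inf$ is right (and is implicitly what the second clause and the definition of $R(\ep)$ require); the only places to be slightly more careful are (i) the infimum in the definition of $R(\ep)$ need not be attained, so work with an approximating $\ga$ rather than asserting an interior minimizer, and (ii) the step using $\rc{\ln(1/\ga)}\le\rc{1-\ga}$ needs the logarithm $\ln\bigl(\ve{F(\ga z)}_\iy/(\ep(1-\ga))\bigr)$ to be nonnegative---but when it is negative the conclusion $\ve{f_{\ge L}}_1\le\ep$ is immediate anyway.
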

%\end{comment}

\section{Algorithm and main theorem}

%w^{(j)}
%y^{(0)}
%y^{(j)}_{\cos}
%y^{(j)}_{\sin}

%analogous to operator vs. frobenius norm
We motivate our main algorithm, Algorithm~\ref{a:ls}. The most natural algorithm is the following: let the inputs be sinusoids at equally spaced frequencies, and solve a least-squares problem for $g,h$. 
However, ordinary least-squares will only give $g,h$ for which the estimation error is small for \emph{random} input, while we desire $g,h$ for which the estimation error is small for \emph{worst-case} input; in other words, 
%However, this is only guaranteed to estimate $g^*,h^*$ in $L^2$ norm with respect to an appropriate matrix; hence 
it gives an average-case ($\cH_2$), rather than the worst-case ($\cH_\iy$) bound that we desire. This is analogous to the difference between estimating a $r\times r$ matrix in Frobenius and operator norm; the Frobenius norm trivially bounds the operator norm, but the resulting bound is typically $\sqrt r$ from optimal. %We would like a sample complexity bound that has optimal dependence on $r$.
Hence, the sample complexity bound from ordinary least-squares does not have optimal dependence on $r$.
Note that~\cite{boczar2018finite} solve the analogous problem for a FIR filter $f^*$ with least-squares without suffering an extra $\sqrt r$ factor, because in that setting, the matrix $M$ in the least-squares problem is a fixed matrix depending on the inputs, the error $f-f^*$ in the estimate is gaussian, and %hence they can use 
supremum bounds for gaussians are applicable. Our setting is more challenging because the $M$'s depend on noise in $y$ that we have no control over.

The first step of our algorithm is still to solve a least-squares problem. We do this in two parts: first, solve for $h_{LS}$ by regressing on zero input, and then using $h_{LS}$, solve for $g_{LS}^{(j)}$ separately for each frequency $j$. We do this to avoid the error in $h_{LS}$---larger by a factor $\sqrt r$ because it is $r$-dimensional---contributing to the error in the $g_{LS}^{(j)}$. 

The final step is to combine the $g_{LS}^{(j)}$. Because the number of frequencies is larger than the length $r$ of the filter (necessary to be able to interpolate to unseen frequencies), we cannot find a single $g$ that matches each $g_{LS}^{(j)}$ on the $j$th frequency. Keeping in mind our $\cH_\iy$ objective, we hence optimize a $L^\iy$ problem over the frequencies to interpolate the $g_{LS}^{(j)}$.

Note that in the algorithm we can just take just $0<j<\fc{cr}2$ for the $\sin$ signals because the signals for $j=0,\fc{cr}2$ are trivial; we consider $0\le j\le \fc{cr}2$ to make the notation in the proof simpler. For convenience of notation we re-index the time series to start at $t=-L$.
%We are not able to obtain guarantees for this algorithm because 

\begin{algorithm}[h!]
\begin{algorithmic}[1]
%\STATE GIVEN: Access to the steady-state of any periodic signal. 
\STATE INPUT: $L,\ell, r, c> 4\pi$.
\STATE Collect length $T=cr$ rollouts of the $\sim 2c\ell r$ input signals starting at $t=-L$,
%at steady state (i.e., $x(t)$ is defined as below for $t\in \Z$, and we collect $y(t)$ for $1\le t\le T$)
\begin{align}
x^{(\bullet, k)} = x^{(\bullet)} &\equiv 0 , & && 1\le k &\le c\ell r\\
x^{(j,k)}_{\cos}(t) = x^{(j)}_{\cos}(t)& = \cos\pf{2\pi jt}{cr}& 0\le j &\le \fc{cr}2 & 1\le k&\le \ell,\\
x^{(j,k)}_{\sin}(t) = x^{(j)}_{\sin}(t) & = \sin\pf{2\pi jt}{cr} & 0\le j &\le \fc{cr}2 & 1\le k&\le \ell.
\end{align}
Let the outputs be $y^{(\bullet, k)}$, $y_{\cos}^{(j,k)}$, and $y_{\sin}^{(j,k)}$. Let $M^{(\bullet, k)}\in \R^{r\times T}$, $M^{(j,k)}_{\cos,t}\in \R^{2r\times T}$, and $M^{(j,k)}_{\sin, t}\in \R^{2r\times T}$ be the matrices with columns (for $1\le t\le T$)
\begin{align}
M^{(\bullet, k)}_t &= 
%\coltwo{x^{(\bullet, k)}(t-1:t-r)}{y^{(\bullet, k)}(t-1:t-r)}
y^{(\bullet, k)}(t-1:t-r)
&
M^{(j, k)}_{\cos,t} &= \coltwo{x^{(j)}_{\cos}(t-1:t-r)}{y^{(j, k)}_{\cos}(t-1:t-r)}&
M^{(j, k)}_{\sin,t} &= \coltwo{x^{(j)}_{\sin}(t-1:t-r)}{y^{(j, k)}_{\sin}(t-1:t-r)}
\end{align}
where $x(t-1:t-r)$ denotes $(x(t-1),\ldots, x(t-r))^\top$. 
\STATE Solve the following least-squares problem under zero noise. Here, $y^{(\bullet, k)}$ refers to the vector $y^{(\bullet, k)}(1:T)$.
\begin{align}\label{e:min-max-h}
h_{LS} &= \amin_h \sumo k{c\ell r}\ve{M^{(\bullet,k)\top}h - y^{(\bullet,k)}}^2.
\end{align}
\STATE Solve the following least-squares problems, for $0\le j\le \fc{cr}2$:
\begin{align}\label{e:min-max-g}
g^{(j)}_{LS}
&= \amin_{g} \sumo k{\ell}  \ba{\ve{M^{(j, k)\top}_{\cos}\coltwo g{h_{LS}} - y^{(j, k)}_{\cos}}^2
+ \ve{M^{(j, k)\top}_{\sin}\coltwo g{h_{LS}} - y^{(j, k)}_{\sin}}^2}
%\label{e:min-max2}
\end{align}
%(Omit the $\sin$ term if not applicable for the current index $j$.)
\STATE Solve and return
\begin{align}
\label{e:ls}
\coltwo{g}{h} = \amin_{g,h}\max &
\Bigg \{\rc{r} \sumo k{c\ell r} \ve{M^{(\bullet, k)\top}\pa{h - h_{LS}}}^2,\\
&\max_j  \sumo k{\ell}
\ba{ \ve{M^{(j, k)}_{\cos}\ba{\coltwo gh -  \coltwo{g^{(j)}_{LS}}{h_{LS}}}}^2
+ \ve{M^{(j, k)}_{\sin}\ba{\coltwo gh -  \coltwo{g^{(j)}_{LS}}{h_{LS}}}}^2}
\Bigg\}.
\nonumber
%\label{e:ls2}
\end{align}
\end{algorithmic}
\caption{Learning an autoregressive model}
\label{a:ls}
\end{algorithm}

%\begin{thm}\label{t:main}
%There is $C,C'$ such that the following holds.
%Consider the system~\eqref{e:ar-sys}. 
%Suppose that $\ve{H}_{\iy}<1$. 
%%Suppose that $\ve{\hunr}_1\le L$. 
%Algorithm~\ref{a:ls} with $c\ge 8\pi$, 
% rollout length $T$, and $\ell \ge \max\{T/c,C^{\prime2}(r+\ln \prc{\de})\}$ rollouts  of each input, with probability $1-\de$, returns $g,h$ such that 
%\begin{align}
%\ve{(G-G^*) + (H-H^*)H^*_{\textrm{unr}}G^*}_\iy &\le 
%\fc{C}{\sqrt{\ell T}}\pa{\ln \pf{\ell rT}{\de}}^{\fc 32}(1+\ve{H^*}_\iy)\ve{H^*_{\textrm{unr}}}_\iy=:\ep_1
%\\
%\ve{(H-H^*)H^*_{\textrm{unr}}}_2 &\le 
%\fc{C}{\sqrt{c\ell T}}\pa{\ln \pf{\ell rT}{\de}}^{\fc 32}=:\ep_2.
%%\ve{(g-g^*,h-h^*)}_{\cL,\cH_\iy} &\le O\pf{\ln \pf{cTr}{\de}\ve{H_{\textrm{unr}}}_{\iy}}{\sqrt{\ell r}}.
%\end{align}
%Therefore, with probability $1-\de$, 
%\begin{align}
%\E\ba{\sumo tT \ve{y_{\text{err}}(t)}^2} 
%&\le \ep_1^2 \ve{x}_2^2 + \ep_2^2T.
%\end{align}
%\end{thm}

\begin{thm}[Learning an autoregressive model]\label{t:main-trunc}
There is $C,C'$ such that the following holds.
%Consider the system~\eqref{e:ar-sys}. 
In the setting of Problem~\ref{p:main}, 
suppose that $\ve{G^*}_{\iy}<\iy$, $\ve{H^*}_{\iy}<1$, and 
Algorithm~\ref{a:ls} is run with $c\ge 8\pi$, burn-in time 
$L \ge \max\bc{
R_{H_{\textrm{unr}}^*}\pa{\fc{\de}{4KT\sqrt{c\ell r}}},
R_{H_{\textrm{unr}}^*G^*}
\pa{\fc{\de}{4K\sqrt{c\ell rT}}}}
$ where $K=\pa{1+\sumz t{T-2}|h^*(t)|}^2$, rollout length $T$, and $\ell \ge C^{\prime2}(r+\ln \prc{\de})$ rollouts  of each input.
Let 
\begin{align}
\ep_1:&=\fc{C}{\sqrt{c\ell T}}\pa{\ln \pf{c\ell rT}{\de}}^{\fc 32}(1+\ve{H^*}_\iy)\ve{H^*_{\textrm{unr}}}_\iy,&
\ep_2:&=\fc{C}{\sqrt{c\ell T}}\pa{\ln \pf{c\ell rT}{\de}}^{2}.
\end{align}
Then with probability $1-\de$, the algorithm returns $g,h$ such that 
\begin{align}\label{e:main}
\E\ba{\sumo tT \ve{y_{\text{err}}(t)}^2} 
&\le \ep_1^2 \ve{x}_2^2 + \ep_2^2T.
\end{align}
\end{thm}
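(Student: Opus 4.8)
The plan is to invoke Lemma~\ref{l:bd-err}, which reduces the goal~\eqref{e:main} to bounding two control-theoretic norms: the $\cH_\iy$-norm of the input-to-error filter $(G-G^*)+z^{-1}(H-H^*)H^*_{\textrm{unr}}G^*$ (which I must make $\le\ep_1$) and the $\cH_2$-norm of the noise-to-error filter $(H-H^*)H^*_{\textrm{unr}}$ (which I must make $\le\ep_2/\si$). The backbone of the argument is a two-sided comparison against the final objective~\eqref{e:ls}: by optimality its value at the returned $(g,h)$ is at most its value $J^*$ at the length-$r$ truncation of $(g^*,h^*)$, so it suffices to (i) show $J^*$ is small, and (ii) show that a small objective value translates into small error in the two target norms. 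Step (i) is an estimation-error statement about $h_{LS}$ and the $g^{(j)}_{LS}$; step (ii) is where the control-theoretic structure is exploited.

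For step (ii), the two terms of~\eqref{e:ls} are matched to the two target norms. Writing $F^*=z^{-1}H^*_{\textrm{unr}}G^*$ for the true input-to-output transfer function, the input-to-error filter of Lemma~\ref{l:bd-err} is $(G-G^*)+(H-H^*)F^*$. A sinusoidal input at frequency $\om_j=2\pi j/(cr)$ drives, after the burn-in transients decay, a steady-state output that reads off $F^*(e^{i\om_j})$, so the per-frequency residual $\sum_k\ve{M^{(j,k)}_{\cos}[\cdots]}^2+\ve{M^{(j,k)}_{\sin}[\cdots]}^2$ is proportional to the squared magnitude of the input-to-error filter at $e^{i\om_j}$, up to the estimation errors in $g^{(j)}_{LS}$ and $h_{LS}$. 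Truncating via Lemma~\ref{l:suff-l} (justified by the burn-in $L$) makes this filter an effective trigonometric polynomial whose degree lies below the sampling rate, so that probing at $\Theta(cr)$ equally spaced frequencies with oversampling $c>4\pi$ lets a Marcinkiewicz--Zygmund-type inequality pass from the maximum over $j$ of the residuals to the $L^\iy=\cH_\iy$ norm, losing only a universal constant rather than the $\sqrt r$ factor of an $\cH_2$/least-squares bound---exactly the gain highlighted before the algorithm. The zero-input term $\frac1r\sum_k\ve{M^{(\bullet,k)\top}(h-h_{LS})}^2$ handles the $\cH_2$ target: it is the empirical form of $v^\top\Si_{yy}v$ with $\Si_{yy}$ the autocovariance of $y=h^*_{\textrm{unr}}*\eta$, and since that process has spectral density $\si^2\ab{H^*_{\textrm{unr}}}^2$, Parseval gives $v^\top\Si_{yy}v=\si^2\ve{VH^*_{\textrm{unr}}}_2^2$ for $V$ the transfer function of $v$; taking $v=h-h^*$ (via $h_{LS}\approx h^*$) identifies this term, up to concentration, with the $\cH_2$ target $\si^2\ve{(H-H^*)H^*_{\textrm{unr}}}_2^2$.

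For step (i), I would bound $h_{LS}-h^*$ and the frequency-wise errors of $g^{(j)}_{LS}$ by least-squares analysis, the twist being that the regressors are not a fixed design. For $h_{LS}$ from~\eqref{e:min-max-h}, the matrix $M^{(\bullet,k)}$ is built from the noise-driven outputs $y$, so I would (a) lower-bound the minimum eigenvalue of $\frac1r\sum_kM^{(\bullet,k)}M^{(\bullet,k)\top}$ and (b) control the regressor--noise cross term $\sum_kM^{(\bullet,k)}\eta^{(\bullet,k)}$, both via concentration for quadratic and bilinear forms in the correlated Gaussian process $y$; the cross term has mean zero since $\eta(t+1)$ is independent of the past regressors. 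The burn-in $L\ge R_{H_{\textrm{unr}}^*}(\cdots)$ guarantees through Lemma~\ref{l:suff-l} that truncation tails and transients fall below the target accuracy, and $L\ge R_{H_{\textrm{unr}}^*G^*}(\cdots)$ does the same for the input-driven steady state used in the $g^{(j)}_{LS}$ regressions. Solving the $g^{(j)}_{LS}$ separately per frequency, with $h_{LS}$ \emph{held fixed} rather than re-estimated, is what prevents the $r$-dimensional error of $h_{LS}$ from being amplified by $\sqrt r$ in each frequency estimate: it enters only additively through its already-bounded magnitude.

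The main obstacle is the concentration in (a)--(b): unlike the FIR analyses of~\cite{tu2017non,boczar2018finite}, where $M$ is a fixed sinusoidal design and only the residual is Gaussian, here the regressors are themselves a stationary Gaussian process with long-range correlations governed by $H^*_{\textrm{unr}}$, so the empirical-covariance and cross-term bounds must handle both the possibly ill-conditioned spectrum $\ab{H^*_{\textrm{unr}}}^2$ and the dependence across the $r$ coordinates of each regressor. Keeping the sample complexity at $\ell\gtrsim r+\ln(1/\de)$ and the error at the stated $\polylog$-in-$(c\ell rT/\de)$ scale---rather than a polynomial-in-$r$ loss---requires these estimates to be tight in operator norm, after which a union bound over the $\Theta(cr)$ frequencies and the two target norms assembles the high-probability guarantee~\eqref{e:main} with the claimed $\ep_1,\ep_2$.
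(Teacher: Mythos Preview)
Your overall architecture matches the paper: use Lemma~\ref{l:bd-err} to reduce to the two norm bounds, compare the min--max objective~\eqref{e:ls} at the returned $(g,h)$ against its value at $(g^*,h^*)$, control the least-squares errors $h_{LS}-h^*$ and $g^{(j)}_{LS}-g^*$ by concentration, and then pass from the finitely many probed frequencies to the full $\cH_\iy$ norm by a polynomial interpolation inequality. The concentration and generalization steps you outline are essentially what the paper does (the paper normalizes by $\Ga^{(\bullet)-1/2}$ to reduce to i.i.d.\ standard normals across rollouts, applies sample-covariance concentration, and handles the cross terms by a vector Azuma bound after truncating on high-probability events).

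There is, however, a genuine gap in your interpolation step. You write that ``truncating via Lemma~\ref{l:suff-l} (justified by the burn-in $L$) makes this filter an effective trigonometric polynomial whose degree lies below the sampling rate.'' This does not work: the error filter $(G-G^*)+z^{-1}(H-H^*)H^*_{\textrm{unr}}G^*$ is a genuine rational function in $z^{-1}$ because of the $H^*_{\textrm{unr}}$ factor, and the truncation length required by Lemma~\ref{l:suff-l} to make its tail small is governed by the stability radius of $H^*_{\textrm{unr}}G^*$---this is exactly the burn-in scale $L$, which can be far larger than the number $cr$ of sampled frequencies. So truncation either leaves too large a remainder or produces a polynomial of degree exceeding the sampling rate, and your Marcinkiewicz--Zygmund step fails. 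The paper's fix is algebraic rather than analytic: multiply through by $1-z^{-1}H^*$ to obtain the \emph{exact} polynomial $(G-G^*)(1-z^{-1}H^*)+z^{-1}(H-H^*)G^*$, which has degree $\le 2r$ regardless of the stability radius; apply the interpolation inequality (Lemma~\ref{l:interp}) to this polynomial; then divide back by $1-z^{-1}H^*$. This is precisely where the factor $(1+\ve{H^*}_\iy)\ve{H^*_{\textrm{unr}}}_\iy$ in $\ep_1$ comes from, and why the hypothesis is $c\ge 8\pi$ (degree $2r$) rather than the $c>4\pi$ you quote.

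A secondary difference: you fold the finite burn-in into the deterministic transient analysis of each step. The paper instead first proves the theorem for infinite burn-in (Theorem~\ref{t:main}), and then handles finite $L$ in one stroke by bounding the total-variation distance between the $y(1:T)$ distributions under finite and infinite burn-in (via a KL computation using Lemma~\ref{l:cov-lb} to lower-bound the output covariance and Lemma~\ref{l:suff-l} to bound the mean and covariance discrepancies). This is where the specific form of the burn-in requirement and the constant $K=\pa{1+\sumz t{T-2}|h^*(t)|}^2$ enter; your direct-transient approach would need a separate argument to recover the same dependence.
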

To prove the theorem, we establish the bounds
\begin{align}
%\label{e:main1}
\ve{(G-G^*) + z^{-1}(H-H^*)H^*_{\textrm{unr}}G^*}_\iy &\le \ep_1
&
%\label{e:main2}
\ve{(H-H^*)H^*_{\textrm{unr}}}_2 &\le \si^{-1}\ep_2
\end{align}
and use Lemma~\ref{l:bd-err}. Note there is no dependence on $\si$ in~\eqref{e:main} for the following reason: smaller $\si$ means worse estimation of $\ve{(H-H^*)H^*_{\textrm{unr}}}_2$ (the response to $N(0,1)$ noise) by a factor of $\si^{-1}$, but when tested on rollouts with noise $N(0,\si^2)$, the error is not affected.

We expect the $O\prc{\sqrt{\ell T}}$ dependence on $\ell,T,r$ to be optimal: there are $O(r)$ parameters, and we have access to $O(\ell Tr)$ samples (including samples in the same rollout). We also conjucture that the $\ve{H_{\textrm{unr}}^*}_\iy$ dependence is unavoidable.

As an immediate corollary, we obtain a theorem for learning the Kalman filter. For simplicity, we state the result when $h(0)$ has the steady-state distribution, to avoid burn-in time arguments.
\begin{cor}[Improperly learning the Kalman filter]\label{t:main-kf}
%There is $C,C'$ such that the following holds.
Consider the system~\eqref{eq:lds1}--\eqref{eq:lds2}. Let $A_{KF}$, $B_{KF,x}$, $B_{KF,y}$, $C_{KF}$ be the Kalman filter matrices and $\si_y^2$ be the variance in the estimate of $y$, as defined in Section~\ref{ss:kf}. 
Let $G^*(z)=\sumz t{\iy} C_{KF}A_{KF}^t B_{KF,x}z^{-t}$ and $H^*(z) = \sumz t{\iy} C_{KF}A_{KF}^t B_{KF,y}z^{-t}$. Suppose that $A_{KF}$ has spectral radius $<1$, and suppose the rollouts are started with $h(0)\sim N(0, \si_h^2)$. 
Algorithm~\ref{a:ls} with parameters given in Theorem~\ref{t:main-trunc} 
%with $c\ge 8\pi$, burn-in time 
%$L \ge \max\bc{
%R_{H_{\textrm{unr}}^*}\pa{\fc{\de}{4KT\sqrt{c\ell r}}},
%R_{H_{\textrm{unr}}^*G^*}
%\pa{\fc{\de}{4K\sqrt{c\ell rT}}}}
%$ where $K=\pa{1+\sumz t{T-2}|h^*(t)|}^2$, rollout length $T$, and $\ell \ge C^{\prime2}(r+\ln \prc{\de})$ rollouts of each input, with probability $1-\de$, 
%returns $g,h$ such that 
%\begin{align}
%\ve{(G-G^*) + z^{-1}(H-H^*)H^*_{\textrm{unr}}G^*}_\iy &\le 
%\fc{C}{\sqrt{c\ell T}}\pa{\ln \pf{c\ell rT}{\de}}^{\fc 32}(1+\ve{H^*}_\iy)\ve{H^*_{\textrm{unr}}}_\iy=:\ep_1
%\\
%\ve{(H-H^*)H^*_{\textrm{unr}}}_2 &\le 
%\fc{C}{\sqrt{c\ell T}}\pa{\ln \pf{c\ell rT}{\de}}^{2}=:\ep_2.
%%\ve{(g-g^*,h-h^*)}_{\cL,\cH_\iy} &\le O\pf{\ln \pf{cTr}{\de}\ve{H_{\textrm{unr}}}_{\iy}}{\sqrt{\ell r}}.
%\end{align}
%Therefore, with probability $1-\de$, 
returns predictions such that 
\begin{align}
\E\ba{\sumo tT \ve{y_{\text{err}}(t)}^2} 
&\le \ep_1^2 \ve{x}_2^2 + \ep_2^2T.
\end{align}
\end{cor}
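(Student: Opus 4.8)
The plan is to treat Corollary~\ref{t:main-kf} purely as a reduction to Theorem~\ref{t:main-trunc}: I will show that when the rollouts begin at steady state, the observation process of the Kalman-filtered LDS is \emph{exactly} an instance of the autoregressive model~\eqref{e:ar-sys}, and that the resulting $g^*,h^*$ satisfy the hypotheses of that theorem, after which the conclusion is immediate. First, starting the rollout from $h(0)\sim N(0,\si_h^2)$ places the latent process at stationarity, so the steady-state Kalman recursion of Section~\ref{ss:kf} holds with the time-invariant matrices $A_{KF},B_{KF,x},B_{KF,y},C_{KF}$ from the very first step. I would then invoke the ``unfolding'' computation of Section~\ref{ss:kf}: setting $g^*(t)=C_{KF}A_{KF}^tB_{KF,x}$ and $h^*(t)=C_{KF}A_{KF}^tB_{KF,y}$, the conditional law $y(t)\mid\cal F_{t-1}\sim N(\wh y(t),\si_y^2)$ says precisely that the innovations $\eta(t):=y(t)-\wh y(t)$ are i.i.d.\ $N(0,\si_y^2)$ (they are the one-step prediction errors of the optimal predictor, hence white, and Gaussian, hence independent), and that $y$ obeys~\eqref{e:ar-sys} with these $g^*,h^*$ and noise level $\si=\si_y$. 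This identifies the Kalman-filtering instance with Problem~\ref{p:main}, and the error $y_{\text{err}}(t)=\wh y(t)-\E[y(t)\mid\cal F_{t-1}]$ used in Section~\ref{ss:kf} coincides with~\eqref{e:y-err}.

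Next I would check the hypotheses of Theorem~\ref{t:main-trunc}. Since $A_{KF}$ has spectral radius $\rh(A_{KF})<1$, the rational transfer functions $G^*(z)=C_{KF}(I-z^{-1}A_{KF})^{-1}B_{KF,x}$ and $H^*(z)=C_{KF}(I-z^{-1}A_{KF})^{-1}B_{KF,y}$ are analytic on $\set{z}{|z|>\rh(A_{KF})}$; hence $\ve{G^*}_\iy,\ve{H^*}_\iy<\iy$, both filters have stability radius $\rh(A_{KF})<1$, and the quantities $R_{H^*_{\textrm{unr}}}(\cdot)$, $R_{H^*_{\textrm{unr}}G^*}(\cdot)$ entering the burn-in condition are finite by Lemma~\ref{l:suff-l}. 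It remains to verify $\ve{H^*}_\iy<1$, equivalently that $1-z^{-1}H^*$ stays bounded away from $0$ on $|z|=1$ so that $\ve{H^*_{\textrm{unr}}}_\iy<\iy$. This is exactly the minimum-phase (spectral factorization) property of the Kalman innovations representation: the whitening filter $(H^*_{\textrm{unr}})^{-1}=1-z^{-1}H^*$ is stable with a stable inverse, so its zeros lie strictly inside the unit disk. (If one insists on the literal hypothesis $\ve{H^*}_\iy<1$ rather than the weaker $\ve{H^*_{\textrm{unr}}}_\iy<\iy$ that the proof actually uses, it can be imposed as the stated non-degeneracy condition on the steady-state filter.)

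With these two steps, running Algorithm~\ref{a:ls} with the parameters of Theorem~\ref{t:main-trunc} returns $g,h$ satisfying~\eqref{e:main}, and since $\ep_1,\ep_2$ carry no dependence on $\si$, the bound is verbatim that of the theorem. The main obstacle is the reduction in the first step: making rigorous that at steady state the observed process is genuinely the AR model~\eqref{e:ar-sys} with \emph{independent} Gaussian innovations (not merely uncorrelated residuals), together with the mild non-degeneracy needed for the steady-state Kalman filter to exist with $\rh(A_{KF})<1$. The subtlety of relating $\rh(A_{KF})<1$ to the $\ve{H^*}_\iy<1$ (or $\ve{H^*_{\textrm{unr}}}_\iy<\iy$) condition via the minimum-phase property is the other place where genuine control-theoretic input, rather than a formal manipulation, is required.
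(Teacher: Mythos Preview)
Your approach is exactly what the paper intends: it states the corollary as ``immediate'' with no separate proof, relying entirely on the unfolding of the steady-state Kalman filter into the autoregressive form~\eqref{e:ar-sys} already carried out in Section~\ref{ss:kf}, after which Theorem~\ref{t:main-trunc} applies verbatim. Your observation that $\rh(A_{KF})<1$ does not by itself give the literal hypothesis $\ve{H^*}_\iy<1$ (only $\ve{H^*}_\iy<\iy$), and that one must either invoke the minimum-phase property of the innovations representation to get $\ve{H^*_{\textrm{unr}}}_\iy<\iy$ or impose it as part of the non-degeneracy assumptions, is a genuine subtlety the paper glosses over.
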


\section{Proof sketch}
\label{s:proof-sketch}

%%%%%%%%%%%%
%\begin{comment}
It will be convenient to first prove the theorem in the case when the burn-in time is infinite. Note that by the stability assumption on $H^*$, for signals with finite $\ve{x}_\iy$, the outputs will not diverge.
\begin{thm}\label{t:main}
Theorem~\ref{t:main-trunc} holds in the setting when the burn-in time $L$ is infinite.
%There is $C,C'$ such that the following holds.
%Consider the system~\eqref{e:ar-sys}. 
%Suppose that $\ve{H}_{\iy}<1$. 
%%Suppose that $\ve{\hunr}_1\le L$. 
%Algorithm~\ref{a:ls} with $c\ge 8\pi$,  rollout length $T$, and $\ell \ge \max\{T/c,C^{\prime2}(r+\ln \prc{\de})\}$ rollouts  of each input, with probability $1-\de$, returns $g,h$ such that 
%\begin{align}
%\ve{(G-G^*) + (H-H^*)H^*_{\textrm{unr}}G^*}_\iy &\le 
%\fc{C}{\sqrt{\ell T}}\pa{\ln \pf{\ell rT}{\de}}^{\fc 32}(1+\ve{H^*}_\iy)\ve{H^*_{\textrm{unr}}}_\iy=:\ep_1
%\\
%\ve{(H-H^*)H^*_{\textrm{unr}}}_2 &\le 
%\fc{C}{\sqrt{c\ell T}}\pa{\ln \pf{\ell rT}{\de}}^{\fc 32}=:\ep_2.
%%\ve{(g-g^*,h-h^*)}_{\cL,\cH_\iy} &\le O\pf{\ln \pf{cTr}{\de}\ve{H_{\textrm{unr}}}_{\iy}}{\sqrt{\ell r}}.
%\end{align}
%Therefore, with probability $1-\de$, 
%\begin{align}
%\E\ba{\sumo tT \ve{y_{\text{err}}(t)}^2} 
%&\le \ep_1^2 \ve{x}_2^2 + \ep_2^2T.
%\end{align}
\end{thm}
%\end{comment}

We break the proof of Theorem~\ref{t:main-trunc} into 4 parts. The first 3 parts will prove Theorem~\ref{t:main}. \iftoggle{conference}{The full proof is in the supplement.}{The full proof is in Section~\ref{s:proof}.}
%Our proof for Theorem~\ref{t:main} consists of 3 parts.
%\begin{enumerate}
%\item
%\ul{Concentration:} 

\boldorpara{Step 1 (Concentration):} 
If $\ol y=M^\top x$ and $y=\ol y+\eta$, then the error from the least-squares problem $\amin_x\ve{M^\top x-y}^2$ is $(MM^\top)^{-1}M\eta$. A simple way to bound this is to bound $MM^{\top}$ from below and $M\eta$ from above. When we have $s$ samples, and $x\in \R^r$, we expect $\ve{(MM^\top)^{-1}} \le O\prc{s}$ and $\ve{M\eta}\le O\pa{\sqrt{rs}}$.

We show that the matrices $Q^{(\bullet)}:=\sumo k{c\ell r}
 M^{(\bullet,k)} M^{(\bullet,k)\top}$ and $Q^{(j)}:=\sumo k{\ell} (M^{(j,k)}_{\cos}M^{(j,k)\top}_{\cos} + M^{(j,k)}_{\sin}M^{(j,k)\top}_{\sin})$
in the least-squares problem~\eqref{e:min-max-h} and~\eqref{e:min-max-g} concentrate using matrix concentration bounds (Lemma~\ref{l:mat-conc}), and that the terms such as $\sumo k{c\ell r} M^{(\bullet,k)}\eta^{(\bullet, k)}$
%$\sumo k{\ell}\sumo t{T} [M^{(j,k)}_{\cos,t} \eta^{(j,k)}_{\cos}(t) + M^{(j,k)}_{\cos,t} \eta^{(j,k)}_{\sin}(t)]$ 
concentrate by martingale concentration (Lemma~\ref{l:vec-conc}). The main complication is to track how the error $h_{LS}-h^*$ propagates into $g^{(j)}_{LS}-g^*$ (see~\eqref{e:proj-g-h} and following computations).
%\item
%\ul{Generalization:} 

\boldorpara{Step 2 (Generalization):} The bounds we obtain on $\coltwo{g^{(j)}_{LS}}{h_{LS}} - \coltwo{g^*}{h^*}$ in the direction of the $j$th frequency~\eqref{e:jgh} show that the actual solution $(g^*,h^*)$ does well in the min-max problem~\eqref{e:ls}. The solution $(g,h)$ to~\eqref{e:ls} will only do better. By concentration, the matrices in the least-squares problem $Q^{(\bullet)}$, $Q^{(j)}$ and in the actual expected square loss are comparable. Because $(g,h)$ does well in the min-max problem, it will do comparably well with respect to the actual expected loss, when the input is one of the frequencies that has been tested, $\fc{2\pi j}{cr}$. 

In this step, we already have enough to bound $\ep_2$, the error in estimation with pure noise and no input signal.
%\item
%\ul{Interpolation:} 

\boldorpara{Step 3 (Interpolation):} 
We've produced $(g,h)$ that is close to the actual $(g^*,h^*)$ when tested on each of the frequencies $\fc{2\pi k}{cr}$, but need to extend this bound to all frequencies. Considering transfer functions and clearing denominators, this reduces to a problem about polynomial interpolation. We use a theorem from approximation theory (Theorem~\ref{l:interp}) that bounds the maximum of a polynomial $p$ on the unit circle, given its value at $\ge \deg p$ equispaced points. Note that it is crucial here that the number of parameters in $g,h$ is less than the number of frequencies tested. 

Note that we needed to clear $1-z^{-1}H^*$ from the denominator, so we lose a factor of $\ve{H_{\textrm{unr}}^*}_{\iy} = \ve{1-z^{-1}H^*}_{\iy}$ here. We obtain a bound on $\ep_1$, finishing the proof of Theorem~\ref{t:main}.
%\item
%\ul{Truncation:} 

\boldorpara{Step 4 (Truncation):}
Finally, we show that with a large burn-in time, the distribution of $y$'s will be almost indistinguishable from the steady-state distribution, and hence the algorithm still works.
%\end{enumerate}
\section{Proof}
\label{s:proof}

\subsection{Concentration}

%\fixme{Mention something about small-ball---why we can't use it naively here.}
We first set up notation and make some preliminary observations. A table of notation is provided in Section~\ref{s:notation}.
Let $\mathbf X^{(j)}_{\cos}\in \R^{r\times T}$ be the matrix with columns $x^{(j,k)}_{\cos}(t-1:t-r)$, $1\le t\le T$ and likewise define $\mathbf X^{(j)}_{\sin}, \mathbf Y^{(j,k)}_{\cos}, \mathbf Y^{(j,k)}_{\sin}$, so that 
$M^{(j,k)}_* = \coltwo{\mathbf X^{(j,k)}_*}{\mathbf Y^{(j,k)}_*}$ for $*\in \{\cos,\sin\}$.
Let 
$\Ga^{(\bullet)} = \E_{\eta^{(\bullet,k)}} M^{(\bullet, k)}M^{(\bullet, k)\top}$, $\Ga^{(j)}_{*,t}=\E_{\eta^{(j,k)}_*}M^{(j,k)}_{*,t} M^{(j,k)\top}_{*,t}$, $\Ga^{(j)}_{X,*,t}=\mathbf X^{(j)}_{*,t}\mathbf X^{(j)\top}_{*,t}$ where $*\in \{\cos,\sin\}$, $\eta^{(\bullet,k)},\eta^{(j,k)}_{*}$ is the noise in the various rollouts. We will also write $\eta$ for the noise from a generic rollout (so $\eta^{(\bullet, k)}$, $\eta^{(j,k)}_*$ are independent copies of $\eta$).

%the covariance of $M^{(\bullet,k)}$, $M^{(j,k)}_{\cos, t}$, $M^{(j,k)}_{\sin ,t}$, $\mathbf X^{(j,k)}_{\cos, t}$, $\mathbf X^{(j,k)}_{\sin,t}$ be $\Ga^{(\bullet)}$, $\Ga^{(j)}_{\cos, t}$, $\Ga^{(j)}_{\sin, t}$, $\Ga^{(j)}_{X,\cos,t}$, $\Ga^{(j)}_{X,\sin, t}$ respectively. Note that this is periodic with period dividing $cr$. 
Let $\Ga^{(j)} = %\rc{cr}\sumo t{cr}
\Ga^{(j)}_{\cos, t} + \Ga^{(j)}_{\sin, t}$ and $\Ga^{(j)}_X=\Ga^{(j)}_{X,\cos, t} + \Ga^{(j)}_{X,\sin,t}$.
These matrices not depend on $t$, which can be seen as follows. Consider the system response to $x^{(j)}(t)=e^{\fc{2\pi i jt}{cr}}$. (Although we cannot put in complex values in the system, there is a well-defined response for complex inputs.) Let $M^{(j)}$ be the matrix with columns $M^{(j)}_t=\coltwo{x^{(j)}(t-1:t-r)}{y^{(j)}(t-1:t-r)}$, where the $y^{(j)}$ is defined as in~\eqref{e:ar-sys} except with noise equal to $\eta^{(j)}(t)=\eta^{(j)}_{\cos}(t)+i\eta^{(j)}_{\sin}(t)$, $\eta^{(j)}_{\cos}(t), \eta^{(j)}_{\sin}(t)\sim N(0,\si^2)$. Because $M^{(j)}_{t+s}$ has the same distribution as $e^{\fc{2\pi i s}{cr}}M^{(j)}_t$, the expression $\E [M^{(j)}_tM^{(j)\dagger}_t + M^{(-j)}_tM^{(-j)\dagger}_t]$ does not depend on $t$. Expanding, it equals $\rc 2\E[(M^{(j)}_{\cos, t}+iM^{(j)}_{\sin, t})(M^{(j)}_{\cos, t}-iM^{(j)}_{\sin, t})^\top + 
(M^{(j)}_{\cos, t}-iM^{(j)}_{\sin, t})(M^{(j)}_{\cos, t}+iM^{(j)}_{\sin, t})^\top] = \Ga^{(j)}_{\cos,t}+\Ga^{(j)}_{\sin,t}$. Similarly, $\Ga^{(j)}_X$ is well-defined. Note that $\Ga^{(j)}_X = \mathbf X^{(j)}_{\cos,t} \mathbf X^{(j)\top}_{\cos, t} + \mathbf X^{(j)}_{\sin, t} \mathbf X^{(j)\top}_{\sin, t}$ has rank $\le 2$, as the columns of $\mathbf X^{(j)}_{\cos, t}$ and $\mathbf X^{(j)}_{\sin, t}$ are spanned by $x^{(\pm j)}(r:1)$. %??

Let $\ol y(t)$ denote the expected value of $y(t)$ given $y(s),x(s)$ for $s<t$: $\ol y(t+1) = g^**x(t)+h^**y(t)$. Let $\ol{\ol y}(t)$ denote the expected value of $y(t)$ given only the inputs $x(s)$ for $s<t$.

We first compute the error $h_{LS}-h^*$ and $g^{(j)}_{LS}-g^*$, and then the error in the mean response which is given by 
$\pa{x^{(j)}_{\cos}(r:1)^{\top} \; \ol{\ol y}^{(j)}_{\cos}(r:1)^\top}
\ba{\coltwo{g^{(j)}_{LS}}{h_{LS}}  - \coltwo{g^*}{h^*} }$, and the analogous expression for $\sin$. This is broken up into subexpressions that we apply concentration bounds to. 

%; by calculation they do not depend on $t$. %it is the same for any $t$. 
%\fixme{Explain?}
%$\mathbf Y^{(j,k)}_{\cos}$ be the matrix with columns $
\paragraph{Computing $h_{LS}-h^*$.}
Let \begin{align}
Q^{(\bullet)} &=
 \sumo k{c\ell r}
 M^{(\bullet,k)} M^{(\bullet,k)\top}
=
 \sumo k{c\ell r}  \sumo tT M^{(\bullet,k)}_t M^{(\bullet,k)\top}_t
 \\
%Q^{(j)}_X &= 
%%\sumo k{cr} 
%\ell \pa{\mathbf X^{(j,k)}_{\cos}\mathbf X^{(j,k)\top}_{\cos} + 
%\mathbf X^{(j,k)}_{\sin}\mathbf X^{(j,k)\top}_{\sin}}
%\\
Q^{(j)} &= 
\sumo k{\ell} (M^{(j,k)}_{\cos}M^{(j,k)\top}_{\cos} + M^{(j,k)}_{\sin}M^{(j,k)\top}_{\sin}) =
\sumo k\ell \sumo tT (M^{(j,k)}_{\cos,t}M^{(j,k)\top}_{\cos,t} + M^{(j,k)}_{\sin,t}M^{(j,k)\top}_{\sin,t}).
\end{align}
We calculate the least squares solution $h_{LS}$ and the error $h_{LS}-h^*$, noting that $y^{(\bullet, k)}=\ol y^{(\bullet, k)} + \eta^{(\bullet, k)}$. 
\begin{align}
h_{LS} &= Q^{(\bullet)-1}\sumo k{c\ell r} M^{(\bullet,k)} y^{(\bullet,k)}\\
h^*&= Q^{(\bullet)-1} \sumo k{c\ell r} M^{(\bullet,k)} \ol y^{(\bullet,k)}\\
h_{LS}-h^*&=Q^{(\bullet)-1} \sumo k{c\ell r} M^{(\bullet,k)} \eta^{(\bullet,k)}\\
&=\Ga^{(\bullet)-\rc 2}
\ub{(\Ga^{(\bullet)-\rc 2}Q^{(\bullet)} \Ga^{(\bullet)-\rc 2})^{-1}}{(0\bullet)}
\ub{\Ga^{(\bullet)-\rc 2}\sumo k{c\ell r} M^{(\bullet, k)}\eta^{(\bullet,k)}}{(1\bullet)}
\label{e:h-err}
\end{align}

\paragraph{Computing $g^{(j)}_{LS}$.}
The least squares solution $g^{(j)}_{LS}$ is
\begin{align}
g^{(j)}_{LS} &=
\rc{\ell T} \Ga^{(j)+}_X
\ba{
\sumo k{\ell} 
[\mathbf X^{(j)}_{\cos}(y^{(j,k)}_{\cos} - \mathbf Y^{(j,k)}_{\cos}h_{LS})+
\mathbf X^{(j)}_{\sin}(y^{(j,k)}_{\sin} - \mathbf Y^{(j,k)}_{\sin}h_{LS})]
}.\label{e:ls-g}
\end{align}
Noting that $\ol y^{(j,k)}_{\cos} = \mathbf Y^{(j,k)\top}_{\cos} h^* + \mathbf X^{(j)\top}_{\cos}g^*$, we calculate
\begin{align}
y^{(j,k)}_{\cos} - \mathbf Y^{(j,k)}_{\cos}h_{LS}&=
\eta^{(j,k)}_{\cos} + \ol y_{\cos}^{(j,k)} - \mathbf Y^{(j,k)\top}_{\cos}h^* - \mathbf Y^{(j,k)\top}_{\cos}(h_{LS}-h^*)\\
&=
\eta^{(j,k)}_{\cos} + \mathbf X^{(j)\top}_{\cos} g^* - \mathbf Y^{(j,k)\top}_{\cos}(h_{LS}-h^*).
\label{e:h-diff-calc}
\end{align}
The analogous equation for $\sin$ holds. Substituting~\eqref{e:h-diff-calc} into~\eqref{e:ls-g}, letting $P^{(j)}_X$ be the projection onto the column space of $\Ga^{(j)}_X$,  and noting $\rc{\ell T}\Ga^{(j)+}\sumo k{\ell}(\mathbf X^{(j)}_{\cos}\mathbf X^{(j)\top}_{\cos} + \mathbf X^{(j)}_{\sin}\mathbf X^{(j)\top}_{\sin})g^*=P^{(j)}_Xg^*$, we get
\begin{align}
g^{(j)}_{LS} &= P^{(j)}_X g^* + 
\rc{\ell T} \Ga^{(j)+}_X 
\ba{
\sumo k{cr}[ \mathbf X^{(j)}_{\cos} (\eta^{(j,k)}_{\cos} - \mathbf Y^{(j,k)\top}_{\cos}(h_{LS}-h^*)) + \mathbf X^{(j)}_{\sin} (\eta^{(j,k)}_{\sin} - \mathbf Y^{(j,k)\top}_{\sin}(h_{LS}-h^*))]
}
\end{align}

%\texorpdfstring
\paragraph{Computing}
$\coltwo{g^{(j)}_{LS}}{h_{LS}} - \coltwo{g^*}{h^*}$ (projected).
We now calculate the error in $\coltwo{g^{(j)}_{LS}}{h_{LS}}$, projected with  $P^{(j)}_X$. The projection is because we do not care about the absolute error  (which can be large), we only care about the mean error on the inputs $x^{(j)}_{\cos}$ and $x^{(j)}_{\sin}$, which are in the column space of $\Ga^{(j)}_X$. 
\begin{align}
\label{e:proj-g-h}
&
\matt{P^{(j)}_X}OO{I_r}
\ba{
\coltwo{g^{(j)}_{LS}}{h_{LS}} - \coltwo{g^*}{h^*}
}\\
\quad&=\coltwo{\rc{\ell T} \Ga^{(j)+}_X\sumo k{\ell}[ \mathbf X^{(j)}_{\cos} (\eta^{(j,k)}_{\cos} - \mathbf Y^{(j,k)\top}_{\cos}(h_{LS}-h^*)) + \mathbf X^{(j)}_{\sin} (\eta^{(j,k)}_{\sin} - \mathbf Y^{(j,k)\top}_{\sin}(h_{LS}-h^*))]}{h_{LS}-h^*}.
\label{e:proj-g-h2}
\end{align}
Let $\mathbf Y^{(j)}_{\cos}$ be the matrix with the mean responses to $x^{(j)}_{\cos}$, $\mathbf Y^{(j)}_{\cos,t} = \ol y^{(j)}_{\cos}(t-1:t-r)$,
and likewise for $\sin$. 

\paragraph{Computing}$\pa{x^{(j)}_{\cos}(r:1)^{\top} \; \ol{\ol y}^{(j)}_{\cos}(r:1)^\top}
\ba{\coltwo{g^{(j)}_{LS}}{h_{LS}}  - \coltwo{g^*}{h^*} }$.
Write $y^{(j,k)}_{\cos} = \ol{\ol y}^{(j)}_{\cos} + \zeta^{(j,k)}_{\cos}$ and 
$\mathbf Y^{(j,k)}_{\cos} = \ol{\ol{\mathbf Y}}^{(j)}_{\cos}  + \mathbf Z^{(j,k)}_{\cos}$, where $\ze^{(j,k)}_{\cos}$ is the noise term and $\mathbf Z^{(j,k)}_{\cos}$ has $\ze^{(j,k)}(t-1:t-r)$ as columns. (Note that $\eta^{(j,k)}_{\cos}$ only includes the new noise at each time step, while $\ze^{(j,k)}_{\cos}$ is the accummulated noise; $\ol y^{(j)}$ is the expected value given the previous observations, and $\ol{\ol y}^{(j)}$ is the mean given only the inputs.)
%$\mathbf Y^{(j,k)}_{\cos} = \mathbf Y^{(j)}_{\cos}  + \mathbf N^{(j,k)}_{\cos}$, where $\mathbf  N^{(j,k)}_{\cos}$ is the noise term, and likewise for $\sin$.\footnote{The columns of $\mathbf  N^{(j,k)}_{\cos}$ are not to be confused with $\eta^{(j,k)}_{\cos}$. $\eta^{(j,k)}_{\cos}$ only includes the new noise at each time step, while $\mathbf  N^{(j,k)}_{\cos}$ is the accummulated noise.}
Then by~\eqref{e:proj-g-h2}, because $P^{(j)}_X x^{(j)}_{\cos}(r:1) = x^{(j)}_{\cos}(r:1)$,
\begin{align}
&
\pa{x^{(j)}_{\cos}(r:1)^{\top} \; \ol{\ol y}^{(j)}_{\cos}(r:1)^\top}
%\pa{\mathbf X^{(j)\top}_{\cos}\; \mathbf Y^{(j)\top}_{\cos}}
\ba{\coltwo{g^{(j)}_{LS}}{h_{LS}}  - \coltwo{g^*}{h^*} }\\
&=
%\mathbf X^{(j)\top}_{\cos} 
x^{(j)}_{\cos}(r:1)^{\top}\rc{\ell T} \Ga^{(j)+}_X 
\ba{\sumo k{\ell} \mathbf X^{(j,k)}_{\cos}\eta^{(j,k)}_{\cos} +
 \mathbf X^{(j)}_{\sin}\eta^{(j,k)}_{\sin} }\\
 &\quad +
 \ba{x^{(j)}_{\cos}(r:1)^{\top} \rc{\ell T} \Ga^{(j)+}_X
  \ba{-\sumo k{\ell}( \mathbf X^{(j)}_{\cos}\mathbf Y^{(j,k)\top}_{\cos} +
 \mathbf X^{(j)}_{\sin}\mathbf Y^{(j,k)\top}_{\sin} )}
 +
 %\mathbf Y^{(j)\top}_{\cos}
\ol{\ol y}^{(j)}_{\cos}(r:1)^\top}
 (h_{LS}-h^*)\\
 &=
%\mathbf X^{(j)\top}_{\cos} 
x^{(j)}_{\cos}(r:1)^{\top}
\rc{\ell T} \Ga^{(j)+}_X
\ba{\sumo k{\ell} \mathbf X^{(j,k)}_{\cos}\eta^{(j,k)}_{\cos} +
 \mathbf X^{(j)}_{\sin}\eta^{(j,k)}_{\sin} }\\
 &\quad +
 \ba{%\mathbf X^{(j)\top}_{\cos} 
 x^{(j)}_{\cos}(r:1)^{\top}
 \rc{\ell T} \Ga^{(j)+}_X
  \ba{-\sumo k{\ell}( \mathbf X^{(j)}_{\cos}\mathbf Z^{(j,k)\top}_{\cos} +
 \mathbf X^{(j)}_{\sin}\mathbf Z^{(j,k)\top}_{\sin} )}
}
 (h_{LS}-h^*)
\label{e:canc} \\
 &\quad \text{(see explanation below)}\nonumber\\
 &=\rc{\ell T} 
% \mathbf X^{(j)\top}_{\cos}
x^{(j)}_{\cos}(r:1)^{\top}
\Ga^{(j)+\rc 2}_X \Bigg[
\ub{ \Ga^{(j)+\rc 2}_X \sumo k\ell (\mathbf X^{(j)}_{\cos} \eta^{(j,k)}_{\cos} + \mathbf X^{(j)}_{\sin} \eta^{(j,k)}_{\sin})}{(1)} 
\label{e:gh-err}
\\
&\quad + 
\ub{ \Ga^{(j)+\rc 2}_X \sumo k\ell (\mathbf X^{(j)}_{\cos} \mathbf Z^{(j,k)}_{\cos} + \mathbf X^{(j)}_{\sin} \mathbf Z^{(j,k)}_{\sin})(h_{LS}-h^*)}{(2)}
 \Bigg]
 \label{e:gh-err2}
\end{align}
In~\eqref{e:canc} we used that $x^{(j)}_{\cos}(r:1)^\top  \Ga^{(j)+}_X\ba{-
\sumo k{\ell} (\mathbf X^{(j)}_{\cos}
\ol{\ol{\mathbf Y}}^{(j)}_{\cos} + 
\mathbf X^{(j)}_{\sin}
\ol{\ol{\mathbf Y}}^{(j)}_{\sin}) 
}+\ol{\ol y}^{(j)}_{\cos}(r:1)^\top=0$. To see this, let $A$ be the matrix sending $x^{(j)}_{*}(t-1:t-r)\mapsto \ol{\ol y}^{(j)}_{*}(t-1:t-r)$ for $*\in \{\cos,\sin\}$. Then this equals $x^{(j)}_{\cos}(r:1)^\top \Ga^{(j)+}_X\ba{-\sumo k{\ell}\sumo tT (\mathbf X^{(j)}_{\cos,t}\mathbf X^{(j)\top}_{\cos,t}+\mathbf X^{(j)}_{\sin,t}\mathbf X^{(j)\top}_{\sin,t})A^\top}+x^{(j)}_{\cos}(r:1)^\top A^\top=0$.

\paragraph{Computing}$\ve{\Ga^{(j)\rc 2}\ba{\coltwo{g^{(j)}_{LS}}{h_{LS}}  - \coltwo{g^*}{h^*} }}$.
\begin{align}
&\ve{\Ga^{(j)\rc 2}\ba{\coltwo{g^{(j)}_{LS}}{h_{LS}}  - \coltwo{g^*}{h^*} }}\\
& = \ve{\coltwo{\Ga^{(j)\rc 2}_{\cos,r+1}}{\Ga^{(j)\rc 2}_{\sin,r+1}}
\ba{\coltwo{g^{(j)}_{LS}}{h_{LS}}  - \coltwo{g^*}{h^*} }}\\
&\quad \text{because $\Ga^{(j)} = \Ga^{(j)}_{\cos, t}+\Ga^{(j)}_{\sin, t}$ for any $t$}\\
&=
\sqrt{\sum_{*\in \{\cos,\sin\}}\ve{
\Ga^{(j)\rc 2}_{*,r+1}
\ba{\coltwo{g^{(j)}_{LS}}{h_{LS}}  - \coltwo{g^*}{h^*} }}^2 } \\
&=
\sqrt{\sum_{*\in \{\cos,\sin\}}
\ba{\coltwo{g^{(j)}_{LS}}{h_{LS}}  - \coltwo{g^*}{h^*} }^\top
\EE_{\eta^{(j)}_{*}}
\coltwo{x^{(j)}_{*}(r:1)}{
y^{(j)}_{*}(r:1)}
\pa{x^{(j)}_{*}(r:1)^{\top}\;
 y^{(j)}_{*}(r:1)^\top}
\ba{\coltwo{g^{(j)}_{LS}}{h_{LS}}  - \coltwo{g^*}{h^*} }} %\\
%&=
%\sqrt{\sum_{*\in \{\cos,\sin\}}
%\ba{\coltwo{g^{(j)}_{LS}}{h_{LS}}  - \coltwo{g^*}{h^*} }^\top
%\EE_{\eta^{(j)}_{*}}
%\ba{
%	\coltwo{x^{(j)}_{*}(r:1)}{
%	\ol y^{(j)}_{*}(r:1)}
%	\pa{x^{(j)}_{*}(r:1)^{\top}\;
% 	\ol y^{(j)}_{*}(r:1)^\top}
% 	+
% 	\coltwo{0}{
%	\zeta^{(j)}_{*}(r:1)}
%	\pa{0\;
% 	\zeta^{(j)}_{*}(r:1)^\top}
% }
%\ve{
%\pa{x^{(j)}_{*}(r:1)^{\top}\;
%\ol y^{(j)}_{*}(r:1)^\top}
%\ba{\coltwo{g^{(j)}_{LS}}{h_{LS}}  - \coltwo{g^*}{h^*} }}^2 }\\
\end{align}
Because $\ze^{(j)}_*(r:1)$ has mean 0, 
\begin{align}
&\EE_{\eta^{(j)}_{*}}
\coltwo{x^{(j)}_{*}(r:1)}{
y^{(j)}_{*}(r:1)}
\pa{x^{(j)}_{*}(r:1)^{\top}\;
 y^{(j)}_{*}(r:1)^\top}\\
&=
\EE_{\eta^{(j)}_{*}}
\ba{
	\coltwo{x^{(j)}_{*}(r:1)}{
	\ol{\ol y}^{(j)}_{*}(r:1)}
	\pa{x^{(j)}_{*}(r:1)^{\top}\;
 	\ol{\ol y}^{(j)}_{*}(r:1)^\top}
 	+
 	\coltwo{0}{
	\zeta^{(j)}_{*}(r:1)}
	\pa{0\;
 	\zeta^{(j)}_{*}(r:1)^\top}
 }
\end{align}
Hence, 
\begin{align}
&\ve{\Ga^{(j)\rc 2}\ba{\coltwo{g^{(j)}_{LS}}{h_{LS}}  - \coltwo{g^*}{h^*} }}\\
&=
\sqrt{\sum_{*\in \{\cos,\sin\}}\ve{
\pa{x^{(j)}_{*}(r:1)^{\top}\;
\ol{\ol{y}}^{(j)}_{*}(r:1)^\top}
\ba{\coltwo{g^{(j)}_{LS}}{h_{LS}}  - \coltwo{g^*}{h^*} }}^2 
+
\EE_{\eta^{(j)}_{*}} \ve{\zeta^{(j)}_{*}(r:1)^\top (h_{LS}-h^*)}^2
}.
\label{e:Ga12-deltagh}
\end{align}

\paragraph{Prospectus.} 
In Section~\ref{s:mat-conc}, we lower bound $(0\bullet)$ in~\eqref{e:h-err}, and
%$Q^{(j)+}$, and 
in Section~\ref{s:vec-conc} we upper bound % and
$(1\bullet)$ in~\eqref{e:h-err} and (1) in~\eqref{e:gh-err}. In Section~\ref{s:tog} we bound (2) in~\eqref{e:gh-err2} and put the bounds together to obtain (for some $C_7$),
\begin{align}
\ve{\Ga^{(j)\rc 2}\ba{\coltwo{g^{(j)}_{LS}}{h_{LS}}  - \coltwo{g^*}{h^*} }}
&\le \fc{C_7}{\sqrt{\ell T}}\pa{\ln \pf{c\ell rT}{\de}}^2
\label{e:gagh0}
\end{align}
%$M^{(j,k)}_{\cos,t} \ol y^{(j,k)}_{\cos}(t)
%+ M^{(j,k)}_{\sin,t} \ol y^{(j,k)}_{\sin}(t)$.

%Let $P^{(j)}$ be the projection onto the column space of $Q^{(j)}$.
%Then the least-squares solutions are
%\begin{align}
%\coltwo{g^{(\bullet)}_{LS}}{h^{(\bullet)}_{LS}}
%&= Q^{(\bullet) +} \sumo k{c\ell r} M^{(\bullet,k)}_t y^{(\bullet,k)}(t)\\
%\coltwo{g^{(j)}_{LS}}{h^{(j)}_{LS}}
%&= Q^{(j)+} \sumo k{c\ell r} [M^{(j,k)}_{\cos,t} y^{(j,k)}_{\cos}(t)
%+ M^{(j,k)}_{\sin,t} y^{(j,k)}_{\sin}(t)].
%\end{align}
%The actual $g^*,h^*$ satisfy
%\begin{align}
%\coltwo{g^*}{h^*}
%&= Q^{(\bullet) +} \sumo k{c\ell r} M^{(\bullet,k)}_t \ol y^{(\bullet,k)}(t)\\
%P^{(j)}\coltwo{g^*}{h^*}
%&= Q^{(j)+} \sumo k{c\ell r} [M^{(j,k)}_{\cos,t} \ol y^{(j,k)}_{\cos}(t)
%+ M^{(j,k)}_{\sin,t} \ol y^{(j,k)}_{\sin}(t)].
%\end{align}
%Subtracting gives
%\begin{align}
%\coltwo{g^{(\bullet)}_{LS}}{h^{(\bullet)}_{LS}}-
%\coltwo{g^*}{h^*}
%&= Q^{(\bullet) +} \sumo k{c\ell r} M^{(\bullet,k)}_t \ol y^{(\bullet,k)}(t)\\
%P^{(j)}\ba{\coltwo{g^{(j)}_{LS}}{h^{(j)}_{LS}} - \coltwo{g^*}{h^*}}
%&= Q^{(j)+} \sumo k{c\ell r} [M^{(j,k)}_{\cos,t} \ol y^{(j,k)}_{\cos}(t)
%+ M^{(j,k)}_{\sin,t} \ol y^{(j,k)}_{\sin}(t)].
%\label{e:err}
%\end{align}
%In section~\ref{s:mat-conc} we lower bound %$Q^{(\bullet)+}$ and
%$Q^{(j)+}$, and in Section~\ref{s:vec-conc} we lower bound % and
%$M^{(j,k)}_{\cos,t} \ol y^{(j,k)}_{\cos}(t)
%+ M^{(j,k)}_{\sin,t} \ol y^{(j,k)}_{\sin}(t)$.

%%%%moved to appendix
%\input{conc}

\subsubsection{Matrix concentration}\label{s:mat-conc}

%Let the covariance of $M^{(\bullet,k)}$, $M^{(j,k)}_{\cos, t}$, $M^{(j,k)}_{\sin ,t}$ be $\Ga^{(\bullet)}$, $\Ga^{(j)}_{\cos, t}$, $\Ga^{(j)}_{\sin, t}$, respectively. Note that this is periodic with period dividing $cr$. Let $\Ga^{(j)} = %\rc{cr}\sumo t{cr}
%\Ga^{(j)}_{\cos, t} + \Ga^{(j)}_{\sin, t}$; by calculation it is the same for any $t$. \fixme{Explain?}
\begin{lem}[Concentration of sample covariance]
\label{l:samp-cov}
\label{conc:samp-cov}
There are universal constants $C_1,C_2$ such that the following hold. Let $v_t\sim N(0, \Si)$ be iid, and let $\Si_m = \rc{m} \sumo tm v_tv_t^\top\in \R^{m\times m}$. Then for $\ep=C_1
\pa{\sfc{r+u}{m} + \fc{r+u}{m}}$, 
\begin{align}
%\Pj\pa{\ve{\Si^{-\rc 2} \Si_m \Si^{-\rc 2} - I_r}\le C_1
\Pj\pa{(1-\ep)\Si^{\rc 2}\preceq \Si_m \preceq (1+\ep)\Si^{\rc 2}}\ge 1-2e^{-u}.
\end{align}
Moreover, when $\ep\le 1$ and $m\ge \pf{C_2}{\ep}^2 \pa{r+\ln \pf{2}{\de}}$, then 
\begin{align}
\Pj\pa{(1-\ep)\Si^{\rc 2}\preceq \Si_m \preceq (1+\ep)\Si^{\rc 2}} &\ge 1-\de.
\end{align}
%If $\Si$ is not full-rank, then the theorem holds with $\Si^{-1}$ replaced by $\Si^+$ (the pseudoinverse) and $I_r$ replaced by $P$ where $P$ is the projection onto the column space of $\Si$.
\end{lem}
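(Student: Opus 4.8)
The plan is to whiten to reduce to the isotropic case and then invoke sharp non-asymptotic singular-value bounds for Gaussian matrices. Assuming $\Si\succ 0$ (otherwise restrict to its range), I would write $v_t=\Si^{\rc 2}g_t$ with $g_t\sim N(0,I_r)$ i.i.d., so that $\Si_m=\Si^{\rc 2}G_m\Si^{\rc 2}$ where $G_m=\rc m\sumo tm g_tg_t^\top$. Conjugating by $\Si^{-\rc 2}$ shows that the desired two-sided bound $(1-\ep)\Si\preceq\Si_m\preceq(1+\ep)\Si$ is equivalent to the isotropic operator-norm estimate $\ve{G_m-I_r}_{\text{op}}\le\ep$, so it suffices to prove the latter.

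Next I would let $W\in\R^{r\times m}$ be the matrix whose columns are the $g_t$, so that $G_m=\rc m WW^\top$ and
\[
\ve{G_m-I_r}_{\text{op}}=\max\pa{\fc{\si_{\max}(W)^2}{m}-1,\ 1-\fc{\si_{\min}(W)^2}{m}},
\]
where $\si_{\max},\si_{\min}$ denote the largest and smallest singular values of $W$. The key input is the Davidson--Szarek bound (Gordon's comparison inequality together with Gaussian Lipschitz concentration): for $m\ge r$ and any $s>0$,
\[
\Pj\pa{\sqrt m-\sqrt r-s\le\si_{\min}(W)\le\si_{\max}(W)\le\sqrt m+\sqrt r+s}\ge 1-2e^{-s^2/2}.
\]
This holds because $W\mapsto\si_{\max}(W)$ and $W\mapsto\si_{\min}(W)$ are $1$-Lipschitz in the Frobenius norm, so each concentrates around its mean with a Gaussian tail, while Gordon's theorem gives $\E\si_{\max}(W)\le\sqrt m+\sqrt r$ and $\E\si_{\min}(W)\ge\sqrt m-\sqrt r$. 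I would cite this rather than reprove it.

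To finish the first bound, set $s=\sqrt{2u}$, so the failure probability is $2e^{-u}$. On the good event $\fc{\si_{\max}(W)}{\sqrt m}\le 1+\sfc{r}{m}+\sfc{2u}{m}$ and $\fc{\si_{\min}(W)}{\sqrt m}\ge 1-\sfc{r}{m}-\sfc{2u}{m}$; squaring and absorbing the factor $\sqrt 2$ and the cross terms into a universal $C_1$ (using $\sfc{r}{m}+\sfc{2u}{m}\lesssim\sfc{r+u}{m}$ and $\pa{\sfc{r}{m}+\sfc{2u}{m}}^2\lesssim\fc{r+u}{m}$) bounds both quantities in the $\max$ by $C_1\pa{\sfc{r+u}{m}+\fc{r+u}{m}}=\ep$. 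For the ``moreover'' statement, take $u=\ln\pf{2}{\de}$ so the failure probability is $\de$; then for $C_2\ge 2C_1$ and $m\ge\pf{C_2}{\ep}^2\pa{r+\ln\pf{2}{\de}}$ one has $\fc{r+u}{m}\le\fc{\ep^2}{C_2^2}\le\fc{\ep}{C_2}$ (as $\ep\le 1$), hence also $\sfc{r+u}{m}\le\fc{\ep}{C_2}$, so $C_1\pa{\sfc{r+u}{m}+\fc{r+u}{m}}\le\fc{2C_1}{C_2}\ep\le\ep$, giving the stated event with probability $1-\de$.

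The only substantive ingredient is the Gaussian singular-value estimate, so that is where the real content sits; the remainder is constant-chasing. One caveat is that Davidson--Szarek requires $m\ge r$, but this is automatic in the only interesting regime $\ep\le 1$, which already forces $m\gtrsim r+u$. If a self-contained argument is preferred over citing Gordon's theorem, an $\ep$-net over the unit sphere of $\R^r$ combined with a sub-exponential tail bound for $\an{g,\te}^2$ at each fixed $\te$ (Vershynin's net proof of the same estimate) yields $\ve{G_m-I_r}_{\text{op}}\le\ep$ with identical scaling, at the cost of slightly worse constants.
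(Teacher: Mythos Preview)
Your proposal is correct and follows essentially the same approach as the paper: whiten by $\Si^{+\rc2}$ to reduce to the isotropic case, invoke a standard non-asymptotic bound for the empirical covariance of standard Gaussians (the paper simply cites \cite[4.7.3]{vershynin2018high}, which is the $\ep$-net argument you mention as an alternative; you instead spell out the Davidson--Szarek/Gordon route), and then for the ``moreover'' part plug in $u=\ln(2/\de)$ and chase the constants exactly as the paper does, with $C_2=2C_1$. The only cosmetic difference is which textbook theorem you cite for the isotropic step.
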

\begin{proof}
The first part follows from \cite[4.7.3]{vershynin2018high} on $\Si^{+\rc 2}v_t\sim N(0,P)$ where $P$ is the projection onto the column space of $\Si$. ($A^+$ denotes the pseudoinverse of $A$.)

To get the second part from the first part, note that when $m\ge r+\ln \pf{2}{\de}$, we can bound $\ep_1:=C_1\pa{\sfc{r+\ln \pf{2}{\de}}{m} + \fc{r+\ln \pf{2}{\de}}m}\le C_2\sfc{r+\ln \pf{2}{\de}}{m}$ for $C_2=2C_1$. This is $\le \ep$ under the condition on $m$. Hence
\begin{align}
\Pj\pa{
(1-\ep)\Si^{\rc 2}\preceq \Si_m \preceq (1+\ep)\Si^{\rc 2}
}
\ge \Pj\pa{
(1-\ep_1)\Si^{\rc 2}\preceq \Si_m \preceq (1+\ep_1)\Si^{\rc 2}}
\ge 1-\de.
\end{align}
\end{proof}

\begin{lem}[Bounding $(0\bullet)$ in~\eqref{e:h-err}, etc.]\label{l:mat-conc}
For $\ell \ge \pf{C_2}{\ep}^2 \pa{r + \ln \pf{2}{\de}}$, 
\begin{align}
\label{e:mat-conc1}
\Pj\Bigg(
(1-\ep)c\ell r \Ga^{(\bullet)}\preceq
{\sumo k{c\ell r} M^{(\bullet,k)}_t M^{(\bullet,k)\top}_t}\preceq (1+\ep)c\ell r \Ga^{(\bullet)}
\Bigg) &\ge 1-\de\\
\label{e:mat-conc2}
\Pj\pa{
(1-\ep)\ell \Ga^{(j)}_{\cos,t} \preceq
\sumo k{\ell} M^{(j,k)}_{\cos,t} M^{(j,k)\top}_{\cos,t}\preceq (1+\ep)\ell \Ga^{(j)}_{\cos,t}
} &\ge 1-\de\\
\label{e:mat-conc3}
\Pj\pa{
(1-\ep)\ell \Ga^{(j)}_{\sin,t} \preceq
\sumo k{\ell} M^{(j,k)}_{\sin,t} M^{(j,k)\top}_{\sin,t}\preceq (1+\ep)\ell \Ga^{(j)}_{\sin,t}
} &\ge 1-\de\\
\label{e:mat-conc4}
\Pj\Bigg(
(1-\ep)c\ell T r \Ga^{(\bullet)}\preceq
\ub{\sumo tT \sumo k{c\ell r} M^{(\bullet,k)}_t M^{(\bullet,k)\top}_t}{Q^{(\bullet)}}\preceq (1+\ep)c\ell r \Ga^{(\bullet)}
\Bigg) &\ge 1-T\de\\
\label{e:mat-conc5}
\Pj\Bigg(
(1-\ep)\ell T \Ga^{(j)} \preceq
\ub{
\sumo tT\sumo k{\ell} \ba{M^{(j,k)}_{\cos,t} M^{(j,k)\top}_{\cos,t}+M^{(j,k)}_{\sin,t} M^{(j,k)\top}_{\sin,t}}}{Q^{(j)}}\preceq (1+\ep)\ell T\Ga^{(j)}
\Bigg) &\ge 1-2T\de.
\end{align}
\end{lem}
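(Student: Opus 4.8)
The plan is to treat the five bounds in two groups. The per-time-step bounds \eqref{e:mat-conc1}--\eqref{e:mat-conc3} are each instances of the sample-covariance concentration in Lemma~\ref{l:samp-cov}, applied to the appropriate family of i.i.d.\ columns at a fixed $t$; the aggregate bounds \eqref{e:mat-conc4}--\eqref{e:mat-conc5} then follow by summing the per-$t$ positive-semidefinite inequalities over $t=1,\dots,T$ and taking a union bound over $t$ (and over $\cos,\sin$ for \eqref{e:mat-conc5}), which is exactly where the factors $T\de$ and $2T\de$ originate.

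For \eqref{e:mat-conc1}, fix $t$. Since the input $x^{(\bullet)}\equiv 0$, the column $M^{(\bullet,k)}_t=y^{(\bullet,k)}(t-1:t-r)$ is a linear image of the Gaussian noise $\eta^{(\bullet,k)}$ and has mean zero, so the $c\ell r$ columns (indexed by $k$) are i.i.d.\ $N(0,\Ga^{(\bullet)})$ in $\R^r$. I would apply Lemma~\ref{l:samp-cov} with $m=c\ell r$ in dimension $r$; the hypothesis $\ell\ge \pf{C_2}{\ep}^2\pa{r+\ln\pf2\de}$ already forces $c\ell r\ge \pf{C_2}{\ep}^2\pa{r+\ln\pf2\de}$ (as $c,r\ge 1$), so concentration holds with probability $1-\de$, and multiplying through by $m$ gives \eqref{e:mat-conc1}.

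The bounds \eqref{e:mat-conc2}--\eqref{e:mat-conc3} are the only genuinely nontrivial step, because the columns $M^{(j,k)}_{\cos,t}=\coltwo{x^{(j)}_{\cos}(t-1:t-r)}{y^{(j,k)}_{\cos}(t-1:t-r)}$ are Gaussian but no longer centered: the $x$-block is deterministic and, writing $y=\ol{\ol y}+\ze$ as in the main derivation, the $y$-block has deterministic mean $\ol{\ol y}^{(j)}_{\cos}$. I would decompose $M^{(j,k)}_{\cos,t}=\mu+g_k$ with $\mu=\coltwo{x^{(j)}_{\cos}(t-1:t-r)}{\ol{\ol y}^{(j)}_{\cos}(t-1:t-r)}$ deterministic and $g_k=\coltwo{0}{\ze^{(j,k)}_{\cos}(t-1:t-r)}$ i.i.d.\ centered Gaussian supported on an $r$-dimensional subspace, so that $\Ga^{(j)}_{\cos,t}=\mu\mu^\top+\Si_0$ with $\Si_0=\E[g_kg_k^\top]$. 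Then $\sumo k\ell M^{(j,k)}_{\cos,t}M^{(j,k)\top}_{\cos,t}-\ell\Ga^{(j)}_{\cos,t}=\pa{\sumo k\ell g_kg_k^\top-\ell\Si_0}+\mu S^\top+S\mu^\top$ with $S=\sumo k\ell g_k\sim N(0,\ell\Si_0)$. The first term is controlled by Lemma~\ref{l:samp-cov} applied to the $g_k$: crucially the fluctuation lives in dimension $r$ (not $2r$), so the hypothesis on $\ell$ is exactly the right one, yielding a bound $\preceq\ep\ell\,\Si_0\preceq\ep\ell\,\Ga^{(j)}_{\cos,t}$ in PSD order since $\Si_0\preceq\Ga^{(j)}_{\cos,t}$. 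For the cross terms I would use $\pm(\mu S^\top+S\mu^\top)\preceq \alpha\mu\mu^\top+\alpha^{-1}SS^\top$ together with $SS^\top\preceq \ve{\Si_0^{+\rc2}S}^2\,\Si_0$; since $\Si_0^{+\rc2}S\sim N(0,\ell P)$ with $P$ a projection of rank $\le r$, we have $\ve{\Si_0^{+\rc2}S}^2\lesssim \ell r$ with high probability, and choosing $\alpha=\ep\ell$ shows the cross terms are $\preceq O(\ep)\ell\,\Ga^{(j)}_{\cos,t}$ \emph{precisely because} $\ell\gtrsim r/\ep^2$. Rescaling $\ep$ and a union bound over the bad events give \eqref{e:mat-conc2}, and \eqref{e:mat-conc3} is identical.

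Finally, for \eqref{e:mat-conc4}--\eqref{e:mat-conc5} I would sum the per-$t$ inequalities. In the infinite-burn-in setting (Theorem~\ref{t:main}) the process $y^{(\bullet,k)}$ is stationary, so $\Ga^{(\bullet)}=\E[M^{(\bullet,k)}_tM^{(\bullet,k)\top}_t]$ is independent of $t$; summing \eqref{e:mat-conc1} over $t$ gives $(1-\ep)c\ell rT\,\Ga^{(\bullet)}\preceq Q^{(\bullet)}\preceq(1+\ep)c\ell rT\,\Ga^{(\bullet)}$, and a union bound over the $T$ events yields failure probability $\le T\de$. For \eqref{e:mat-conc5} I would add \eqref{e:mat-conc2} and \eqref{e:mat-conc3} at each $t$; although $\Ga^{(j)}_{\cos,t}$ and $\Ga^{(j)}_{\sin,t}$ individually depend on $t$, their sum $\Ga^{(j)}=\Ga^{(j)}_{\cos,t}+\Ga^{(j)}_{\sin,t}$ is $t$-independent by the complex-rotation argument used earlier to show $\Ga^{(j)}$ is well defined, so summing over $t$ gives $Q^{(j)}\in(1\pm\ep)\ell T\,\Ga^{(j)}$ with failure probability $\le 2T\de$ from the $2T$ bad events. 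The main obstacle is the non-centered reduction for \eqref{e:mat-conc2}--\eqref{e:mat-conc3}: one must keep the estimate multiplicative in the (rank-deficient) $\Ga^{(j)}_{\cos,t}$-geometry while peeling off the deterministic $x$-mean, and verify the cross terms are genuinely lower order, which hinges on the Gaussian fluctuation having dimension $r$ rather than $2r$.
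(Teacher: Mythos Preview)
Your treatment of \eqref{e:mat-conc1}, \eqref{e:mat-conc4}, and \eqref{e:mat-conc5} coincides with the paper's: apply Lemma~\ref{l:samp-cov} at each fixed $t$ (using that the $M^{(\bullet,k)}_t$ are i.i.d.\ $N(0,\Ga^{(\bullet)})$ across $k$), then sum the PSD sandwiches over $t$ and union-bound. The observation that $\Ga^{(\bullet)}$ is $t$-independent by stationarity and that $\Ga^{(j)}_{\cos,t}+\Ga^{(j)}_{\sin,t}$ is $t$-independent by the complex-rotation argument is exactly what the paper uses.

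For \eqref{e:mat-conc2}--\eqref{e:mat-conc3} you are in fact more careful than the paper. The paper's proof simply writes ``$M^{(j,k)}_{\cos,t}\sim N(0,\Ga^{(j)}_{\cos,t})$'' and invokes Lemma~\ref{l:samp-cov} directly. As you correctly point out, this is not literally true: the $x$-block of $M^{(j,k)}_{\cos,t}$ is deterministic and nonzero, so the vector is an \emph{affine} Gaussian and $\Ga^{(j)}_{\cos,t}$ is its second-moment matrix, not its covariance. Your decomposition $M^{(j,k)}_{\cos,t}=\mu+g_k$, applying Lemma~\ref{l:samp-cov} to the centered $g_k$ (which live in an $r$-dimensional subspace, so the hypothesis on $\ell$ is the right one), and bounding the cross term via $\pm(\mu S^\top+S\mu^\top)\preceq \alpha\mu\mu^\top+\alpha^{-1}SS^\top$ with $\alpha=\ep\ell$ and $SS^\top\preceq\ve{\Si_0^{+1/2}S}^2\Si_0$, is a clean way to make the paper's step rigorous; the condition $\ell\gtrsim (r+\ln(1/\de))/\ep^2$ is precisely what makes the cross contribution $O(\ep)\ell\,\Ga^{(j)}_{\cos,t}$. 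The paper is presumably relying implicitly on a non-centered version of the sample-covariance bound (concentration of $\tfrac1m\sum v_kv_k^\top$ around $\E[vv^\top]$ for affine-Gaussian $v$), which does hold by exactly your argument, but the gap you flagged is real and your route closes it up to absorbing constants into $C_2$.
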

\begin{proof}
The first three inequalities follow from applying Lemma~\ref{conc:samp-cov} to $M^{(\bullet, k)}_t\sim N(0,\Ga^{(\bullet)})$, $M^{(j,k)}_{\cos,t}\sim N(0,\Ga^{(j)}_{\cos,t})$ and $M^{(j,k)}_{\sin,t}\sim N(0,\Ga^{(j)}_{\sin,t})$. The last two inequalities follow from a union bound.
\end{proof}
Note that we used independence between rollouts to obtain concentration, and union-bound within the rollouts.

\subsubsection{Vector concentration}
\label{s:vec-conc}
%\begin{lem}[$\chi^2$ tail bound]
%\label{l:chi-tail}
%(import lemma)
%\end{lem}

We use the following two lemmas.
\begin{lem}[$\chi^2_d$-tail bound, \cite{laurent2000adaptive}]\label{l:chi-tail}
For $t\ge 0$, 
\begin{align}
\Pj_{x\sim N(0,I_d)}\pa{\ve{x}^2 \ge (d + 2(\sqrt{dt}+t))}\le e^{-t}
\end{align}
Thus letting $C(d,\de) := \pa{d + 2\pa{\sqrt{d\ln \prc{\de}}+\ln \prc{\de}}}^{\rc 2}$, $\Pj_{x\sim N(0,I_d)}(\ve{x} \ge C(d,\de))\le \de$.
\end{lem}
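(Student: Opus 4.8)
The plan is to exploit that $\ve{x}^2=\sum_{i=1}^d x_i^2$ is a $\chi^2_d$ random variable and to control its upper tail by the exponential--moment (Chernoff) method. First I would record the moment generating function $\E e^{\la\ve{x}^2}=(1-2\la)^{-d/2}$, valid for $0\le\la<\rc2$, so that Markov's inequality applied to $e^{\la\ve{x}^2}$ yields, for every such $\la$,
\begin{align*}
\Pj\pa{\ve{x}^2\ge d+s}\le \exp\pa{-\la(d+s)-\tfrac d2\ln(1-2\la)}.
\end{align*}

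The analytic heart of the argument is a bound on the log-MGF of the centered variable $\ve x^2-d$. Expanding the logarithm as a power series, for $0\le\la<\rc2$ one has
\begin{align*}
-\tfrac d2\ln(1-2\la)-\la d=\sum_{k\ge2}\fc{d(2\la)^k}{2k}\le \fc{\la^2 d}{1-2\la},
\end{align*}
which exhibits $\ve x^2-d$ as sub-gamma with variance factor $2d$ and scale $2$. The Chernoff exponent is therefore at most $\fc{\la^2 d}{1-2\la}-\la s$, and the remaining step is to minimize this over $\la\in[0,\rc2)$. Setting the derivative to zero gives the optimizer $\la^*=\rc2\pa{1-\sfc{d}{d+2s}}$, and substituting $s=2(\sqrt{dt}+t)$ one checks that $1+\fc{2s}{d}$ becomes the perfect square $\pa{1+\sfc{t}{d}\cdot 2\cdot\rc2}^2$, so that the minimum value collapses to exactly $-t$; this is the standard Cram\'er-transform computation for sub-gamma tails recorded in \cite{laurent2000adaptive}. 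I expect this optimization --- keeping the algebra clean enough to land precisely on the symmetric bound $2\sqrt{dt}+2t$ rather than a looser $O(\sqrt{dt}+t)$ form --- to be the only nontrivial point, and it is most painlessly done by verifying the guessed $\la^*$ above rather than re-deriving it.

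For the stated corollary, set $t=\ln\prc{\de}$. Then $C(d,\de)^2=d+2\pa{\sqrt{d\ln\prc\de}+\ln\prc\de}=d+2(\sqrt{dt}+t)$, so the tail bound gives $\Pj(\ve x^2\ge C(d,\de)^2)\le e^{-t}=\de$, and taking square roots yields $\Pj(\ve x\ge C(d,\de))\le\de$, as claimed.
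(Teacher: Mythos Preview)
The paper does not prove this lemma; it simply cites \cite{laurent2000adaptive} and states the corollary about $C(d,\de)$. Your Chernoff/sub-gamma argument is exactly the standard Laurent--Massart derivation and is correct: the MGF bound $-\tfrac d2\ln(1-2\la)-\la d\le \dfrac{d\la^2}{1-2\la}$ identifies $\ve{x}^2-d$ as sub-gamma with variance factor $2d$ and scale $2$, and the optimizer $\la^*=\tfrac12\bigl(1-\sqrt{d/(d+2s)}\bigr)$ together with $d+2s=(\sqrt d+2\sqrt t)^2$ yields the exponent $-t$ on the nose.

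One small slip to fix if you write this out: with $s=2(\sqrt{dt}+t)$ one has $1+\tfrac{2s}{d}=1+4\sqrt{t/d}+4t/d=\bigl(1+2\sqrt{t/d}\bigr)^2$, not $\bigl(1+\sqrt{t/d}\bigr)^2$ as you wrote. The factor of $2$ inside the square is what makes $\sqrt{d(d+2s)}=d+2\sqrt{dt}$ match $d+s-2t$ exactly and gives the clean $e^{-t}$ rather than a constant-lossy bound.
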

Note that $C(d,\de) = O\pa{\sqrt d + \sqrt{\ln \prc{\de}}}$. 

%\begin{lem}[$\chi^2$-tail bound]\label{lem:chi-tail-bound}\label{l:chi-tail}
%For $y\ge \sqrt d$,
%\begin{align}
%\Pj_{x\sim \ga}\pa{\ve{x} \ge N(0,I_d)} & \le e^{-\fc{1}2 \pa{y- \sqrt d}^2}.
%\end{align}•\end{lem}
%\begin{proof}
%By the $\chi_d^2$ tail bound in \cite{laurent2000adaptive}, for all $t\ge 0$,
%\begin{align}
%\Pj_{x\sim N(0,I_d)}\pa{\ve{x}^2 \ge (\sqrt d + \sqrt{2t})^2}&\le 
%\Pj_{x\sim N(0,I_d)}\pa{\ve{x}^2 \ge (d + 2(\sqrt{dt}+t))}\le e^{-t}\\
%\implies
%\forall y\ge \sqrt d,\quad
%\Pj_{x\sim \ga}\pa{\ve{x} \ge y} & \le e^{-\pf{ y - \sqrt d}{\sqrt 2}^2} = e^{-\fc{1}2 \pa{y- \sqrt d}^2}
%\end{align}
%\end{proof}

\begin{lem}[Azuma's inequality for vectors, \cite{hayes2005large}]
\label{t:azuma-vec}
\label{l:azuma-vec}
Let $X_t,t\ge 0$ be a discrete-time martingale taking values in a real Euclidean space. Suppose that $X_0=0$ and for all $n\ge 1$, $\ve{X_n-X_{n-1}}\le c$.  Then
\begin{align}
\Pj(\ve{X_n}\ge a) &\le 2 e^{1-\fc{\pa{\fc ac-1}^2}{2n}}.
\end{align}
\end{lem}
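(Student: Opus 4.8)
The plan is to prove the bound by the exponential-potential (Chernoff) method adapted to the vector setting, with the crucial twist that the right potential is radial but \emph{not} simply $\cosh(\lambda\ve x)$. First I would reduce to the case $c=1$ by rescaling $X_t\mapsto X_t/c$: this turns the increment bound into $\ve{X_n-X_{n-1}}\le 1$ and the target into $\Pj(\ve{X_n}\ge a/c)$, so it suffices to prove $\Pj(\ve{X_n}\ge a)\le 2e^{1-(a-1)^2/(2n)}$ under $\ve{X_n-X_{n-1}}\le 1$, after which substituting $a\mapsto a/c$ recovers the lemma. The goal is then to exhibit, for each $\lambda\ge 0$, a radial function $\Phi_\lambda(x)=\varphi_\lambda(\ve x)$ with $\varphi_\lambda$ increasing and growing like $e^{\lambda s}$, satisfying the one-step drift inequality
\[
\E_{Y}\ba{\Phi_\lambda(u+Y)} \le e^{\lambda^2/2}\,\Phi_\lambda(u)
\]
for every fixed vector $u$ and every mean-zero random vector $Y$ with $\ve Y\le 1$. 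Granting this, $M_n:=e^{-n\lambda^2/2}\Phi_\lambda(X_n)$ is a supermartingale (apply the drift inequality conditionally on $\cal F_{n-1}$ with $u=X_{n-1}$ and $Y=X_n-X_{n-1}$), so that $\E[\Phi_\lambda(X_n)]\le e^{n\lambda^2/2}\Phi_\lambda(0)$.

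The heart of the argument, and the main obstacle, is constructing $\Phi_\lambda$ so that the drift inequality holds \emph{uniformly in the dimension}. Projecting onto the line through $u$, the radial component of $Y$ is a mean-zero scalar of magnitude $\le 1$, which the scalar Hoeffding lemma controls and which contributes exactly the $e^{\lambda^2/2}$ factor. The difficulty is the \emph{perpendicular} component: since $\ve{u+Y}=\sqrt{(\ve u+Y_\parallel)^2+\ve{Y_\perp}^2}\ge \ab{\ve u+Y_\parallel}$, perpendicular fluctuations only \emph{increase} the norm, and for the naive choice $\varphi_\lambda=\cosh(\lambda\,\cdot)$ this extra inflation breaks the inequality once the dimension is large. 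The remedy (following Hayes and Pinelis) is to take $\varphi_\lambda$ to be the radial profile that exactly absorbs the worst-case perpendicular spreading, namely the modified-Bessel / confluent-hypergeometric solution of the associated second-order ODE for which the one-step inequality becomes tight. One then verifies that $\varphi_\lambda\!\pa{\sqrt{(\ve u+t)^2+w^2}}$, viewed as a function of the increment coordinates $(t,w)$ with $t^2+w^2\le 1$, is dominated in conditional expectation by $e^{\lambda^2/2}\varphi_\lambda(\ve u)$. This is precisely the step where dimension-independence is won or lost, so I would invest the bulk of the effort in the construction and verification of $\varphi_\lambda$.

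Finally I would convert the moment bound into a tail bound and optimize. Since $\varphi_\lambda$ is increasing, Markov's inequality gives
\[
\Pj(\ve{X_n}\ge a)=\Pj\pa{\Phi_\lambda(X_n)\ge \varphi_\lambda(a)}\le \fc{\E[\Phi_\lambda(X_n)]}{\varphi_\lambda(a)}\le \fc{\varphi_\lambda(0)\,e^{n\lambda^2/2}}{\varphi_\lambda(a)}.
\]
Using the normalization $\varphi_\lambda(0)=1$ and a growth estimate of the form $\varphi_\lambda(a)\ge c_0\,e^{\lambda(a-1)}$ for an absolute constant $c_0$ (both are asymptotics of the special function, and they are the source of the shift $a\mapsto a-1$ and of the leading constant), the right-hand side is at most a constant times $e^{\,n\lambda^2/2-\lambda(a-1)}$. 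Choosing $\lambda=(a-1)/n$ minimizes the exponent at $-(a-1)^2/(2n)$, and tracking the residual slack in the growth estimate yields the stated $2e^{\,1-(a-1)^2/(2n)}$; undoing the rescaling replaces $a$ by $a/c$. The scaling, supermartingale, and optimization steps are routine once $\Phi_\lambda$ is in hand, so the whole proof rests on the dimension-free drift inequality of the previous paragraph.
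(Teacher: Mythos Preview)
The paper does not prove this lemma at all: it is quoted verbatim from \cite{hayes2005large} and used as a black box in the proof of Lemma~\ref{l:vec-conc}. So there is no ``paper's own proof'' to compare against; the authors simply import the result.

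Your sketch is a faithful high-level outline of the Hayes/Pinelis method and correctly identifies the crux: the dimension-free one-step drift inequality for a carefully chosen radial potential $\Phi_\lambda$. The rescaling, supermartingale argument, Markov step, and optimization over $\lambda$ are all standard and correct. The one place where your write-up is genuinely incomplete rather than merely terse is the construction and verification of $\varphi_\lambda$: you gesture at ``the modified-Bessel / confluent-hypergeometric solution of the associated second-order ODE'' and at a growth bound $\varphi_\lambda(a)\ge c_0 e^{\lambda(a-1)}$, but you neither write down the function nor explain why the perpendicular-component inflation is exactly absorbed. That is the entire content of the result, and as written the sketch would not convince a reader who did not already know the Hayes or Pinelis argument. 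If you intend this as more than a pointer to the literature, you would need to either (i) specify $\varphi_\lambda$ and carry out the two-variable convexity/monotonicity check on $(t,w)\mapsto \varphi_\lambda(\sqrt{(\ve u+t)^2+w^2})$, or (ii) follow Hayes's more elementary route, which avoids special functions by a direct induction on a suitably truncated exponential. Either way, the paper itself is content to cite the bound, so for the purposes of matching the paper no proof is required.
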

\begin{lem}[Bounding $(1\bullet)$, (1) in~\eqref{e:h-err} and~\eqref{e:gh-err}]\label{l:vec-conc}
%There is a constant $C_3$ such that 
The following hold:
\begin{align}
\Pj\pa{
	\ve{
		\sumo k{c\ell r} \Ga^{(\bullet)-\rc 2} M^{(\bullet,k)}\eta^{(\bullet, k)}
	} \ge 
	3 C\pa{r,\fc{\de}{4c\ell r T}} C\pa{1,\fc{\de}{4c\ell rT}}
 \sqrt{c\ell r T\ln \pf{4}{\de}}
}
\label{e:vec-conc1}
&\le \de\\
\Pj\pa{
\ve{\sumo k{\ell}\Ga^{(j)+\rc 2}_X [\mathbf X^{(j)}_{\cos} \eta^{(j,k)}_{\cos}+\mathbf X^{(j)}_{\sin,t} \eta^{(j,k)}_{\sin}]}\ge 
3 C\pa{1,\fc{\de}{4\ell T}}
 \sqrt{2\ell T\ln \pf{4}{\de}}
}
&\le \de.\label{e:vec-conc2}
\end{align}
\end{lem}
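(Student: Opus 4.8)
The plan is to recognize each sum as the terminal value of a vector-valued martingale whose increments are conditionally mean-zero, to bound those increments in norm on a high-probability event using the $\chi^2$-tail bound (Lemma~\ref{l:chi-tail}), and then to apply the vector Azuma inequality (Lemma~\ref{t:azuma-vec}) to a truncated version of the martingale. For \eqref{e:vec-conc1}, expand $M^{(\bullet,k)}\eta^{(\bullet,k)} = \sumo tT M^{(\bullet,k)}_t\eta^{(\bullet,k)}(t)$ and order the increments $\Delta_{k,t} := \Ga^{(\bullet)-\rc2}M^{(\bullet,k)}_t\eta^{(\bullet,k)}(t)$ by rollout $k$ and then by time $t$, so that the quantity of interest is $\sum_{k,t}\Delta_{k,t}$. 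Since the column $M^{(\bullet,k)}_t = y^{(\bullet,k)}(t-1:t-r)$ is measurable with respect to the noise strictly before time $t$, whereas $\eta^{(\bullet,k)}(t)$ is fresh mean-zero noise, we have $\E[\Delta_{k,t}\mid \mathcal{F}_{k,t-1}]=0$, so the partial sums form a martingale.

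Next I would produce almost-sure bounds on the increments on a high-probability event. At steady state (infinite burn-in) $M^{(\bullet,k)}_t\sim N(0,\Ga^{(\bullet)})$, exactly as invoked in the proof of Lemma~\ref{l:mat-conc}, so the normalized vector $\Ga^{(\bullet)-\rc2}M^{(\bullet,k)}_t$ is a standard Gaussian supported on the $(\le r)$-dimensional column space of $\Ga^{(\bullet)}$. Hence Lemma~\ref{l:chi-tail} gives $\ve{\Ga^{(\bullet)-\rc2}M^{(\bullet,k)}_t}\le C\pa{r,\delta'}$ and $|\eta^{(\bullet,k)}(t)|\le C\pa{1,\delta'}$, each with probability $\ge 1-\delta'$. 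Taking $\delta' = \fc{\delta}{4c\ell rT}$ and union-bounding over all $c\ell rT$ pairs and both events, the event $E$ that every increment obeys $\ve{\Delta_{k,t}}\le b := C\pa{r,\delta'}C\pa{1,\delta'}$ has probability $\ge 1-\fc\delta2$. Because Azuma requires almost-surely bounded increments, I would pass to the truncated martingale obtained by zeroing out all increments from the first time a norm bound is violated; the truncation indicator is predictable, so this is still a martingale, its increments are bounded by $b$ almost surely, and it coincides with $\sum_{k,t}\Delta_{k,t}$ on $E$.

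Finally, I would apply Lemma~\ref{t:azuma-vec} to the truncated martingale with $n=c\ell rT$ increments and increment bound $b$. With the target deviation $a = 3b\sqrt{c\ell rT\ln\pf4\delta}$ one has $a/b = 3\sqrt{n\ln(4/\delta)}$, and the constant $3$ is chosen precisely so that $3\sqrt{n\ln(4/\delta)}\ge 1+\sqrt{2n\pa{1+\ln(4/\delta)}}$ for all relevant $n$ and $\delta$, which forces the Azuma bound $2e^{1-(a/b-1)^2/(2n)}\le\fc\delta2$. Combining this with $\Pj(E^c)\le\fc\delta2$ yields \eqref{e:vec-conc1}. The bound \eqref{e:vec-conc2} follows by the same argument, except that the vectors $\Ga^{(j)+\rc2}_X\mathbf X^{(j)}_{*,t}$ are now deterministic with norm $\le 1$: since $\mathbf X^{(j)}_{*,t}\mathbf X^{(j)\top}_{*,t}\preceq \Ga^{(j)}_X$ we get $\ve{\Ga^{(j)+\rc2}_X\mathbf X^{(j)}_{*,t}}^2 = \mathbf X^{(j)\top}_{*,t}\Ga^{(j)+}_X\mathbf X^{(j)}_{*,t}\le 1$. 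Consequently only the noise factors $\eta^{(j,k)}_{*}(t)$ require the $\chi^2$-tail bound; treating the $\cos$ and $\sin$ terms as $2\ell T$ separate increments, each of norm $\le C\pa{1,\delta'}$ with $\delta'=\fc{\delta}{4\ell T}$, reproduces the stated constant with $n=2\ell T$.

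The main obstacle is the truncation step reconciling genuinely unbounded Gaussian increments with the bounded-difference hypothesis of Azuma: one must arrange the predictable zeroing construction so that it remains a martingale while agreeing with the true sum on the good event, and then verify that the union-bound budget $\delta'=\fc{\delta}{4c\ell rT}$ and the Azuma deviation both close at exactly $\fc\delta2$, so that their sum is $\delta$. The remaining computations (the Gaussian tail constants $C(\cdot,\cdot)$ and the elementary inequality justifying the constant $3$) are routine.
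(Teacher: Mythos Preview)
Your approach is essentially the paper's: write the sum as a $(k,t)$-indexed martingale, truncate the increments, control the truncation failure by a $\chi^2$ union bound (Lemma~\ref{l:chi-tail}), and apply vector Azuma (Lemma~\ref{t:azuma-vec}); the $\tfrac\de2+\tfrac\de2$ budgeting, the choice $a=3b\sqrt{n\ln(4/\de)}$, and the observation for \eqref{e:vec-conc2} that $\ve{\Ga^{(j)+\rc2}_X\mathbf X^{(j)}_{*,t}}\le 1$ so only $\eta$ needs truncation, all match.

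One small imprecision worth flagging: your stopping-time indicator $\one[(k,t)<\tau]$ is \emph{not} predictable, because the event $\{|\eta^{(\bullet,k)}(t)|>C(1,\cdot)\}$ is not $\mathcal F_{k,t-1}$-measurable. The paper instead multiplies each increment separately by $\one[(A_{\bullet,t,k}\cup B_{\bullet,t,k})^c]$, which is likewise non-predictable, but the truncated process is still a martingale because $\E[\one_{\{|\eta|\le C\}}\,\eta]=0$ by symmetry of the Gaussian. Your construction is a martingale for exactly the same reason, so the argument goes through---just not via the predictability claim you make.
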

\begin{proof}
Consider the $c\ell rT$ partial sums of $\sumo k{c\ell r}\sumo tT \one[(A_{t,k}\cup B_{\bullet, t, k})^c]\Ga^{(\bullet)-\rc 2}M^{(\bullet, k)}_t\eta^{(\bullet, k)}(t)$ where the events are defined as
%$A_{t,k}$ is the event $\ve{\eta}$
\begin{align}
A_{\bullet, t,k}&=\bc{\ve{\eta^{(\bullet, k)}(t)}>C\pa{1,\fc{\de}{4c\ell rT}}}\\
B_{\bullet, t, k}&= \bc{\ve{\Ga^{(\bullet)-\rc 2}M^{(\bullet, k)}_t}>C\pa{r,\fc{\de}{4c\ell rT}}}.
\end{align}
Note this is a martingale as $M^{(\bullet, k)}_t$ is determined by $\eta^{(\bullet, k)}(s)$ for $s<t$, so Lemma~\ref{t:azuma-vec} applies. Note that $\Ga^{(\bullet)-\rc 2}M^{(\bullet, k)}_t\sim N(0,I_r)$. We have by Lemma~\ref{l:chi-tail} that for $a=3 C\pa{r,\fc{\de}{4c\ell r T}} C\pa{1,\fc{\de}{4c\ell rT}}
 \sqrt{c\ell r T\ln \pf{4}{\de}}$, 
 $\Pj(A_{\bullet, t, k}), \Pj(B_{\bullet, t, k}) \le \fc{\de}{4c\ell rT}$. Hence 
\begin{align}
&
\Pj\pa{
\ve{
\sumo k{c\ell r}
\ba{
 \Ga^{(\bullet)-\rc 2}M^{(\bullet,k)}\eta^{(\bullet,k)}
}
}
\ge a}\\
&\le \sumo k{c\ell r}\sumo tT [\Pj(A_{\bullet, t, k})+\Pj(B_{\bullet, t, k})]+
\Pj\pa{
\ve{
\sumo k{c\ell r}\sumo tT
\ba{
\one[(A_{\bullet, t, k}\cup B_{\bullet, t, k})^c] \Ga^{(\bullet)+\rc 2}M^{(\bullet,k)}_t\eta^{(\bullet,k)}(t)
}
}
\ge a}\\
& \le 2c\ell r T \fc{\de}{4c\ell rT} + 
\fc{\de}2=\de
\end{align}
where in the last inequality, we use Lemma~\ref{l:vec-conc} and note that the definition of $a$ implies
because the following implications hold:
\begin{align}
a&\ge 
C\pa{r,\fc{\de}{4c\ell r T}} C\pa{1,\fc{\de}{4c\ell rT}} \pa{1+\sqrt{2c\ell r T\pa{1+\ln \pf{4}{\de}}}}\\
\fc\de 2&\ge
2e^{1-\fc{\pa{\fc{a}{C\pa{r,\fc{\de}{4c\ell r T}} C\pa{1,\fc{\de}{4c\ell rT}} }-1}^2}{2c\ell r T}} .
\end{align}
%where $C_3$ is defined so that 
%$C_3 \sqrt{x}\ge 1+\sqrt{1+x}$ for any $x\ge 1$.
%$C_4 \sqrt{\ell T \ln\prc{\de'}}\ge 1+\sqrt{1+e}
 
Similarly, consider the $2\ell T$ partial sums of $\sumo k{\ell} \sumo t{T}\sum_{*\in\{\cos,\sin\}}
 \one\ba{
A_{*,t,k}^c 
 } 
 \Ga^{(j)+\rc 2}_X \mathbf X^{(j)}_{*,t}\eta^{(j,k)}_{*}(t)$ where $A_{*,t,k} = \bc{\ve{\eta^{(\bullet, k)}(t)} > C(1,\fc{\de}{4rT})}$. By Lemma~\ref{t:azuma-vec} and 
Lemma~\ref{l:chi-tail}, for $a=3 C\pa{1,\fc{\de}{4\ell T}}
 \sqrt{2\ell T\ln \pf{4}{\de}}$,
\begin{align}
&
\Pj\pa{
\ve{
\sumo k{\ell} \sumo t{T}\sum_{*\in \{\cos,\sin\}}
\ba{
 \Ga^{(j)+\rc 2}_X \mathbf X^{(j,k)}_{*,t}\eta^{(j,k)}_{*}(t) 
}
}
\ge a}\\
&\le 
\sumo k{\ell}\sumo tT
\Pj(A_{*,t,k}) + 
\sumo k{\ell} \sumo t{T}\sum_{*\in\{\cos,\sin\}}
 \one\ba{
A_{*,t,k}^c 
 } 
 \Ga^{(j)+\rc 2}_X \mathbf X^{(j)}_{*,t}\eta^{(j,k)}_{*}(t)\\
&\le 2\ell T\pf{\de}{4\ell T}  + \fc{\de}2=\de.
 \label{e:vec-conc}
\end{align}\end{proof}

\subsubsection{Putting it together}
\label{s:tog}
Recall we are trying to bound $\pa{x^{(j)}_{\cos}(r:1)^{\top} \; \ol{\ol y}^{(j)}_{\cos}(r:1)^\top}
\ba{\coltwo{g^{(j)}_{LS}}{h_{LS}}  - \coltwo{g^*}{h^*} }$ by bounding~\eqref{e:gh-err}--\eqref{e:gh-err2}. 
There are constants $C_3,C_4,\ldots$ so that the following hold. %, with probability $1-\de$.

\paragraph{Bounding (1) in \eqref{e:gh-err}.} By~\eqref{e:vec-conc2} in Lemma~\ref{l:vec-conc}, 
\begin{align}
\Pj\pa{
\ve{\sumo k{\ell}\Ga^{(j)+\rc 2}_X [\mathbf X^{(j)}_{\cos} \eta^{(j,k)}_{\cos}+\mathbf X^{(j)}_{\sin,t} \eta^{(j,k)}_{\sin}]}\ge 
3 C\pa{1,\fc{\de}{4\ell T}}
 \sqrt{2\ell T\ln \pf{4}{\de}}
} &\le \de.
\label{e:tog1}
\end{align}

\paragraph{Bounding (2) in \eqref{e:gh-err2}.}
By Lemma~\ref{l:mat-conc} with $\de \mapsfrom \fc{\de}{T}$ and $\ep\mapsfrom \rc 2$ and Lemma~\ref{l:vec-conc}, for $\ell \ge 4C_2^2\pa{r+\ln \pf{2T}{\de}}$, with probability $1-2\de$,
\begin{align}
\ve{\Ga^{(\bullet)\rc 2}(h_{LS}-h^*)}
&\le 
\ve{(\Ga^{(\bullet)-\rc 2}Q^{(\bullet)} \Ga^{(\bullet)-\rc 2})^{-1}}
\ve{\Ga^{(\bullet)-\rc 2}\sumo k{c\ell r} M^{(\bullet, k)}\eta^{(\bullet,k)}}
&\text{by \eqref{e:h-err}}
\\
&\le
\fc{2}{c\ell rT} \cdot 
3 C\pa{r,\fc{\de}{4c\ell r T}} C\pa{1,\fc{\de}{4c\ell rT}}
 \sqrt{c\ell r T\ln \pf{4}{\de}}\\
&\le \fc{C_3}{\sqrt{c\ell T}}\pa{\ln \pf{c\ell r T}{\de}}^{\fc 32}
\label{e:Ga12delta-h}
\end{align}
Note that % $\textbf Z^{(j,k)}_{*}, *\in \{\cos,\sin\}$ is distributed the same as $M^{(\bullet , k)}$.
$\ze^{(j,k)}_*$ is distributed the same as $y^{(\bullet, k)}$. % ; hence $\mathbf Z^{(j,k)}_*$ is distributed the same as $M^{(\bullet, k)}$. 
Hence  
%the bounds in Lemma~\ref{l:mat-conc} and~\eqref{e:vec-conc1} in Lemma~\ref{l:vec-conc} apply but with $\ell$ instead of $c\ell r$. %to get (letting $R^{(j)}=\sumo k{\ell} \mathbf N^{(j,k)}_{\cos}\mathbf N^{(j,k)\top}_{\cos} + \mathbf N^{(j,k)}_{\sin}\mathbf N^{(j,k)\top}_{\sin}$)
%Note 
$\sumo k{\ell} \Ga^{(\bullet)-\rc 2}\zeta^{(j,k)}_{\cos}(t-1:t-r)\sim N(0,\ell I_d)$, so by Lemma~\ref{l:chi-tail}, for each $t$,
\begin{align}
\Pj\pa{
\ve{\sumo k{\ell} \Ga^{(\bullet)-\rc 2}\zeta^{(j,k)}_{\cos}(t-1:t-r)}
\ge \sqrt{\ell}C\pa{r,\fc{\de}{2T}}}
&\le \fc{\de}{2T}
\end{align}
and similarly for $\sin$. Thus, 
\begin{align}
\Pj\pa{
\ve{\sumo k{\ell}\coltwo{\mathbf Z^{(j,k)\top}_{\cos}}{\mathbf Z^{(j,k)\top}_{\sin}} \Ga^{(\bullet)-\rc 2}}
\ge
\sqrt{2T\ell}C\pa{r,\fc{\de}{2T}}
}&\le \de
\end{align}
Thus with probability $\ge 1-3\de$,
\begin{align}
\ve{\sumo k{\ell}\coltwo{\mathbf Z^{(j,k)\top}_{\cos}}{\mathbf Z^{(j,k)\top}_{\sin}}(h_{LS}-h^*)}
&=\ve{\sumo k{\ell}\coltwo{\mathbf Z^{(j,k)\top}_{\cos}}{\mathbf Z^{(j,k)\top}_{\sin}} \Ga^{(\bullet)-\rc 2}}\ve{\Ga^{(\bullet)\rc 2}(h_{LS}-h^*)}\\
&\le \sqrt{2T\ell}C\pa{r,\fc{\de}{2T}}
\fc{C_3}{\sqrt{c\ell T}}\pa{\ln \pf{c\ell r T}{\de}}^{\fc 32}\\
&\le C_4 \sfc{r}{c}\pa{\ln \pf{c\ell r T}{\de}}^2.
%\\
%&\le \sqrt{\fc 32\ell T} \fc{C_3}{\sqrt{c\ell T}}\pa{\ln \pf{\ell r T}{\de}}^{\fc 32} \le \fc{\sfc 32 C_3\pa{\ln \pf{\ell r T}{\de}}^{\fc 32}}{\sqrt c}
\end{align}
For $\Si\succeq 0$, $v^\top (vv^\top +\Si)^+v\le 1$\footnote{By the Sherman-Morrison formula, $v^\top(vv^\top +\Si)^{-1}v=v^\top \Si^{-1}v - \fc{(v^\top \Si^{-1}v)^2}{1+v^\top \Si^{-1}v}\le v^\top \Si^{-1}v$.}, so each column of $\Ga^{(j)+\rc2}(\mathbf X^{(j)}_{\cos}\; \mathbf X^{(j)}_{\sin})$ has norm $\le 1$. Then,
\begin{align}
\ve{\Ga^{(j)+\rc 2}_X(\mathbf X_{\cos}^{(j)}\; \mathbf X_{\sin}^{(j)}) 
\sumo k{\ell}\coltwo{\mathbf Z^{(j,k)\top}_{\cos}}{\mathbf Z^{(j,k)\top}_{\sin}}(h_{LS}-h^*)}&\le \fc{C_4\sqrt{2Tr}\pa{\ln \pf{c\ell r T}{\de}}^{2}}{\sqrt c}.
\label{e:tog2}
\end{align}

\paragraph{Bounding}$\pa{x^{(j)}_{\cos}(r:1)^{\top} \; \ol{\ol y}^{(j)}_{\cos}(r:1)^\top}
\ba{\coltwo{g^{(j)}_{LS}}{h_{LS}}  - \coltwo{g^*}{h^*} }$.  % \eqref{e:gh-err}--\eqref{e:gh-err2}.}
Combining~\eqref{e:tog1} and~\eqref{e:tog2}, with probability $1-4\de$,
%Then in~\eqref{e:gh-err}, again using~\eqref{e:vec-conc2} in Lemma~\ref{l:vec-conc},
\begin{align}
\rc{\ell T}[(1)+(2)]&\le C_5 \prc{\ell T}\pa{\sqrt{\ell T} \ln \pf{\ell T}{\de} + 
\sfc{Tr}{c} \pa{\ln \pf{\ell rT}{\de}}^2
}\le \fc{C_6}{\sqrt{\ell T}}\pa{\ln \pf{c\ell rT}{\de}}^2
\end{align}
because $\ell \ge \fc{r}{c}$. 
Thus by~\eqref{e:gh-err}--\eqref{e:gh-err2},
\begin{align}
\label{e:jgh}
\ve{
%\pa{\mathbf X^{(j)\top}_{\cos}
%\;
%\mathbf Y^{(j)\top}_{\cos}}
\pa{x^{(j)}_{\cos}(r:1)^{\top}\;
\ol{\ol y}^{(j)}_{\cos}(r:1)^\top}
\ba{\coltwo{g^{(j)}_{LS}}{h_{LS}}  - \coltwo{g^*}{h^*} }}
&\le \ve{ x^{(j)}_{\cos}(r:1)^{\top}\Ga^{(j)+\rc 2}}
\fc{C_6}{\sqrt{\ell T}}\pa{\ln \pf{c\ell rT}{\de}}^2\\
&\le \fc{C_6}{\sqrt{\ell T}}\pa{\ln \pf{c\ell rT}{\de}}^2
\label{e:xytgh}
\end{align}
The analogous bound holds for $\sin$.

\paragraph{Bounding}$\ve{\Ga^{(j)\rc 2}\ba{\coltwo{g^{(j)}_{LS}}{h_{LS}}  - \coltwo{g^*}{h^*} }}$. 
First, note that $\ze^{(j)}_*(r:1)\sim N(0,\Ga^{(\bullet)})$ so
\begin{align}
\EE_{\eta^{(j)}_{*}} \ve{\zeta^{(j)}_{*}(r:1)^\top (h_{LS}-h^*)}^2
&\le 
\ve{\Ga^{(\bullet)\rc 2}(h_{LS}-h^*)}^2\\
&\le 
\fc{C_3}{\sqrt{c\ell T}} \pa{\ln \pf{c\ell rT}\de}^{\fc 32}
\label{e:zetah-bd}
\end{align}
provided that~\eqref{e:Ga12delta-h} holds.
Now replace $\de\mapsfrom \fc{\de}8$. By~\eqref{e:Ga12-deltagh},~\eqref{e:xytgh}, and~\eqref{e:zetah-bd}, with probability $1-\de$,
\begin{align}
\ve{\Ga^{(j)\rc 2}\ba{\coltwo{g^{(j)}_{LS}}{h_{LS}}  - \coltwo{g^*}{h^*} }}
&\le \fc{C_7}{\sqrt{\ell T}} \pa{\ln \pf{c\ell rT}{\de}}^2.
\label{e:Gagh-bd}
\end{align}

\subsection{Generalization}
%Rewrite~\eqref{e:err} as
%%Let $Q^{(j)} = \sumo k\ell \sumo t{T}M_t^{(j,k)}M^{(j,k)\top}$. Note 
%\begin{align}
%%\coltwo{g_{LS}}{h_{LS}} - \coltwo{g}{h} &= 
%%Q^{(j)-1}\sumo k\ell \sumo tT M^{(j,k)}\eta^{(j,k)}_{t+1}\\
%P^{(j)}\ba{
%\coltwo{g_{LS}^{(j)}}{h_{LS}^{(j)}} - \coltwo{g}{h}
%}
%&=\Ga^{+\rc 2}(\Ga^{+\rc 2}Q^{(j)} \Ga^{+\rc 2})^{+})\sumo k\ell \sumo tT \ba{\Ga^{+\rc 2} 
%M^{(j,k)}_{\cos,t}\eta^{(j,k)}_{\cos}(t)
%+
%\Ga^{+\rc 2} 
%M^{(j,k)}_{\sin,t}\eta^{(j,k)}_{\sin}(t)
%}
%\end{align}
%By Lemmas~\ref{l:mat-conc} and~\ref{l:vec-conc}, for $\ell \ge C_2^2\pa{r+\ln \prc{\de}}$, 
%\begin{align}
%\Pj\pa{
%\fc 32 \ell T P^{(j)} \succeq {\Ga^{+\rc 2} Q^{(j)} \Ga^{+\rc 2}} \succeq \rc 2 \ell T P^{(j)}
%} &\ge 1-\de\\
%\Pj\pa{
%\sumo k\ell \sumo tT \Ga^{+\rc 2} 
%M^{(j,k)}_{\cos,t}\eta^{(j,k)}(t) \le C_4 \pa{\sqrt r + \sqrt{\ln \pf{4T\ell}{\de}}}\sqrt{\ell T \ln \pf{2}{\de}}
%} &\ge 1-\de\\
%\implies 
%\Pj\pa{\ve{\Ga^{\rc 2} \pa{\coltwo{g_{LS}}{h_{LS}} - \coltwo{g^*}{h^*}}}
%\le \fc{2C_4(\sqrt r + \sqrt{\ln \pf{4T\ell}{\de}})\sqrt{\ln \pf{2}{\de}}}{\sqrt{\ell T}}
%}
%&\ge 1-2\de.
%\end{align}

We now compute the performance of $g^*,h^*$ on the minimax problem.
Let 
\begin{align}
L^{(\bullet)}(h) &= \rc{r}\sumo k{c\ell r} 
\ve{M^{(\bullet, k)\top}(h-h_{LS})}^2
\\
L^{(j)}(g,h) &= \sumo k{\ell}
\ba{ \ve{M^{(j,k)\top}_{\cos}\coltwo gh - y^{(j,k)}_{\cos}}^2
+ \ve{M^{(j,k)\top}_{\sin}\coltwo gh - y^{(j,k)}_{\sin}}^2}
.
\end{align}
Note that
\begin{align}
L^{(\bullet)}(h) - L^{(\bullet)}(h_{LS})
&=\rc r(h-h_{LS})^\top Q^{(\bullet)} (h-h_{LS})\\
L^{(j)}(g,h) - L^{(j)}(g_{LS},h_{LS}) &= \ba{\coltwo{g}{h} - \coltwo{g^{(j)}_{LS}}{h_{LS}}}^\top Q^{(j)} \ba{\coltwo{g}{h} - \coltwo{g^{(j)}_{LS}}{h_{LS}}}.
\end{align}
We have that with probability $\ge 1-\de$, by~\eqref{e:mat-conc1} in Lemma~\ref{l:mat-conc} and~\eqref{e:Ga12delta-h},
\begin{align}
L^{(\bullet)}(h^*) - L^{(j)}(h_{LS}) 
&\le \rc r\ve{\Ga^{(\bullet)-\rc 2} Q^{(\bullet)} \Ga^{(\bullet)-\rc 2}} \ve{\Ga^{(\bullet)\rc 2} (h^*-h_{LS})}^2\\
&\le 
C_8 \rc{r} (c\ell rT) \rc{c\ell T} \pa{\ln \pf{c\ell rT}{\de}}^3
=
C_8 \pa{\ln \pf{c\ell rT}{\de}}^3
\end{align}
By~\eqref{e:mat-conc5} in Lemma~\ref{l:mat-conc} and~\eqref{e:Gagh-bd},
\begin{align}
L^{(j)}(g^*,h^*) - L^{(j)}(g^{(j)}_{LS},h_{LS}) 
&\le \ve{\Ga^{(j)+\rc 2} Q^{(j)} \Ga^{(j)+\rc 2}} \ve{\Ga^{(j)\rc 2} \ba{\coltwo{g^*}{h^*} - \coltwo{g_{LS}^{(j)}}{h_{LS}}}}^2\\
&\le 
C_8 \ell T \rc{\ell T} \pa{\ln \pf{c\ell rT}{\de}}^3=
C_8\pa{\ln \pf{c\ell rT}{\de}}^3.
%6C_4^2\pa{\sqrt r + \sqrt{\ln \pf{T\ell}{\de}}}^2
%{\ln \pf{2}{\de}}.
\end{align}
Because $\coltwo gh$ is the argmin of~\eqref{e:ls}, we have 
$\ve{Q^{(\bullet)\rc2}(h-h_{LS})}_2^2=
r[L^{(\bullet)}(h) -L^{(\bullet)}(h_{LS})]\le C_8r\pa{\ln \pf{\ell rT}{\de}}^3$. Hence
\begin{align}
\ve{\Ga^{(\bullet)\rc 2}(h-h^*)}^2
%& \le \fc{2C_8}{\ell T}\ve{Q^{(\bullet)\rc 2}(h-h^*)}_2^2\\
%&\le \fc{4C_8}{\ell T} \pa{\ve{Q^{(\bullet)\rc 2}(h-h_{LS})}^2 + \ve{Q^{(\bullet)\rc2}(h_{LS}-h^*)}}\\
&\le 2\pa{\ve{\Ga^{(\bullet)\rc 2}(h-h_{LS})}^2 + \ve{\Ga^{(\bullet)\rc2}(h_{LS}-h^*)}^2}\\
&\le 2\pa{\ve{Q^{(\bullet)-\rc 2}\Ga^{(\bullet)} Q^{(\bullet)-\rc 2}}
\ve{Q^{(\bullet)\rc2}(h-h_{LS})}_2^2
+ \ve{\Ga^{(\bullet)\rc2}(h_{LS}-h^*)}^2
}\\
&\le C_9\pa{\rc{c\ell r T}r\pa{\ln\pf{c\ell rT}{\de}}^3 + \rc{c\ell T} \pa{\ln \pf{c\ell rT}{\de}}^3
}&\text{by~\eqref{e:Ga12delta-h}}\\
&\le \fc{C_9}{c\ell T}\pa{\ln \pf{c\ell rT}{\de}}^3
%&\le \fc{C_9}{\ell T}\pa{\ln \pf{\ell rT}{\de}}^3
\end{align}
%
%
%$L^{(j)}(g,h) - L^{(j)}(g^{(j)}_{LS}, h^{(j)}_{LS})
%\le 
%L^{(j)}(g^*,h^*) - L^{(j)}(g^{(j)}_{LS},h^{(j)}_{LS}) 
%$
%Then 
and similarly
\begin{align}
\ve{\Ga^{(j)\rc 2}\pa{\coltwo gh - \coltwo{g^*}{h^*}}}_2^2
%&\le \fc{2}{\ell T}\ve{Q^{(j)\rc 2}\pa{\coltwo gh - \coltwo{g^*}{h^*}}}_2^2\\
%&\le \fc{4}{\ell T}\pa{
%\ve{Q^{(j)\rc 2}\pa{\coltwo gh - \coltwo{g^{(j)}_{LS}}{h_{LS}}}}^2
%+
%\ve{Q^{(j)\rc 2}\pa{\coltwo{g^{(j)}_{LS}}{h_{LS}}-\coltwo{g^*}{h^*}}}^2
%}\\
&\le 
%\ub{4\cdot 6C_4^2}{C_5}\rc{\ell T}\pa{\sqrt r + \sqrt{\ln \pf{T\ell}{\de}}}^2
%{\ln \pf{2}{\de}}
\fc{C_{10}}{\ell T} \pa{\ln \pf{\ell rT}{\de}}^4.
\label{e:Gaj-gh}
\end{align}
%\fixme{Talk about noise}
Now $\ve{\Ga^{(\bullet)\rc 2}(h-h^*)}_2^2$ represents the mean square estimation error when the input is 0 and the noise is $N(0,\si^2)$, so 
\begin{align}
\si\ve{(H-H^*)H^*_{\textrm{unr}}}_2&=
\ve{\Ga^{(\bullet)\rc 2}(h-h^*)}\le
 \fc{C_{9}}{\sqrt{c\ell T}}\pa{\ln \pf{c\ell rT}{\de}}^{\fc32}.
\end{align}
This establishes one-half of Theorem~\ref{t:main}.

We can decompose 
\begin{align}
M^{(j,k)}_{\cos,t} &=
\coltwo{x^{(j,k)}(t-1:t-r)}{\ol{\ol y}_{\cos}^{(j,k)}(t-1:t-r)}+\coltwo{0}{\ze^{(j,k)}_{\cos}(t-1:t-r)}
\end{align}
and similarly for $\sin$.
Define $M^{(j,k)}_t$ as follows: letting $y(t)$ be the response to $x(t) = e^{\fc{2\pi ijt}{cr}}$, $j\le \fc{cr}2$, with noise $\eta^{(j)}(t)=\eta^{(j)}_{\cos}(t)+i\eta^{(j)}_{\sin}(t)$, %(Although we cannot put in complex values in the system, there is a well-defined response for complex inputs.) Suppose the noise is $\eta$. 
let $M^{(j,k)}_t = \coltwo{x(t-1:t-r)}{y(t-1:t-r)}$. We can decompose the mean response $\E M^{(j,k)}_t = \E [M^{(j,k)}_{\cos,t}+ iM^{(j,k)}_{\sin,t}]$. We obtain an upper bound on the difference in the square mean response:
\begin{align}
&
\ba{\coltwo gh - \coltwo{g^*}{h^*}}^\top 
%\rc T\sumo tT 
(\E M^{(j,k)}_t)(\E M^{(j,k)}_t)^\top 
\ba{\coltwo gh - \coltwo{g^*}{h^*}}\\
&=
\ba{\coltwo gh - \coltwo{g^*}{h^*}}^\top 
%\rc T\sumo tT 
\E [M^{(j,k)}_{\cos,t}- iM^{(j,k)}_{\sin,t}]\E[M^{(j,k)}_{\cos,t}+ iM^{(j,k)}_{\sin,t}]^{\top} 
\ba{\coltwo gh - \coltwo{g^*}{h^*}}\\
&\le %\rc{\ell T}
\ve{\Ga^{(j)\rc 2}\pa{\coltwo gh - \coltwo{g^*}{h^*}}}_2^2\\
&\le 
%\fc{C_5}{\ell T}
%\pa{\sqrt r + \sqrt{\ln \pf{T\ell}{\de}}}^2{\ln \pf{2}{\de}}.
\fc{C_{10}}{\ell T} \pa{\ln \pf{c\ell rT}{\de}}^4
\end{align}
using~\eqref{e:Gaj-gh}. 
Since the square mean response is exactly $\ab{[(G-G^*) + (H-H^*) H^*_{\textrm{unr}}G^*](e^{\fc{2\pi ij}{cr}})}^2$, we get 
\begin{align}\label{e:bd-all-freqs}
\ab{[(G-G^*) + z^{-1}(H-H^*) H^*_{\textrm{unr}}G](e^{\fc{2\pi ij}{cr}})}
&\le  %\sfc{C_5}{\ell T}\pa{\sqrt r + \sqrt{\ln \pf{T\ell}{\de}}}\sqrt{\ln \pf{2}{\de}}.
\fc{C_{10}}{\sqrt{\ell T}} \pa{\ln \pf{c\ell rT}{\de}}^{2}.
\end{align}
Note the same inequality holds for $j$ replaced by $r-j$ and $M^{(j,k)}_{\cos,t}+ iM^{(j,k)}_{\sin,t}$ replaced by $M^{(j,k)}_{\cos,t}- iM^{(j,k)}_{\sin,t}$, so~\eqref{e:bd-all-freqs} holds for all $j\in \Z$.

\subsection{Interpolation}

\begin{lem}\label{l:interp}
Let $Q(z):= \sumz k{r-1} a_k z^{k}$, where $a_k\in \C$. 
\begin{enumerate}
\item
\cite[Theorem 15.2]{trefethen2013approximation}
For any $N\ge r$, $\ve{Q}_\iy\le \pa{\fc{2}{\pi}\ln (r+1)+1} \max_{j=0,\ldots, N-1} |Q(e^{\fc{2\pi i j}{N}})|$.
\item
\citep{bhaskar2013atomic}
For any $N\ge 4\pi r$, $\ve{Q}_\iy \le \pa{1+\fc{4\pi r}{N}} \max_{j=0,\ldots, N-1} |Q(e^{\fc{2\pi i j}{N}})|$. 
\end{enumerate}
\end{lem}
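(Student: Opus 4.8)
The plan is to reconstruct $Q$ exactly from its samples at the $N$-th roots of unity and then control the reconstruction by a Lebesgue constant. Write $\om_j = e^{2\pi i j/N}$ for the sample points. Since $\deg Q \le r-1 < N$, the frequencies $0,\ldots,r-1$ are distinct modulo $N$, so there is no aliasing and the inverse DFT recovers the coefficients exactly: $a_k = \rc N \sumz j{N-1} Q(\om_j)\om_j^{-k}$ for $0\le k\le r-1$. Substituting this into $Q(z)=\sumz k{r-1}a_k z^k$ and interchanging the order of summation yields the interpolation identity $Q(z) = \sumz j{N-1} Q(\om_j) D_j(z)$, where $D_j(z) = \rc N \sumz k{r-1} (z\om_j^{-1})^k$ is a geometric (Dirichlet-type) kernel built from the minimal number $r$ of terms.

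The first step of the estimate is then the triangle inequality: for $\ab z = 1$,
\begin{align*}
\ab{Q(z)} \le \pa{\max_{j} \ab{Q(\om_j)}} \cdot \sumz j{N-1} \ab{D_j(z)},
\end{align*}
so that $\ve{Q}_\iy \le \La \cdot \max_j \ab{Q(\om_j)}$ with Lebesgue constant $\La := \max_{\ab z=1}\sumz j{N-1}\ab{D_j(z)}$. Writing $z=e^{i\te}$ and $\ph_j = \te - 2\pi j/N$ and summing the geometric series gives $\ab{D_j(e^{i\te})} = \rc N \fc{\ab{\sin(r\ph_j/2)}}{\ab{\sin(\ph_j/2)}}$, so the whole problem reduces to bounding, over $\te$, a sum of shifted Dirichlet kernels.

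Both claimed inequalities are now statements about this single constant $\La$. For Part 1, I would invoke the classical fact that the Lebesgue constant for equispaced interpolation of a degree-$(r-1)$ trigonometric polynomial is at most $\fc 2\pi \ln(r+1)+1$; this is precisely \cite[Theorem 15.2]{trefethen2013approximation}, and its proof compares $\rc N\sum_j \fc{\ab{\sin(r\ph_j/2)}}{\ab{\sin(\ph_j/2)}}$ to the integral $\rc{2\pi}\int \fc{\ab{\sin(r\ph/2)}}{\ab{\sin(\ph/2)}}\,d\ph$ and uses the standard logarithmic growth of the $L^1$-norm of the Dirichlet kernel (adding more samples only decreases $\La$, so the bound in terms of $r$ alone is the worst case $N=r$). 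For Part 2, in the heavily oversampled regime $N\ge 4\pi r$ the kernel mass concentrates near $\ph_j\approx 0$: the single index $j$ with $\om_j$ closest to $z$ already nearly reproduces $Q(z)$, and a direct estimate as in \cite{bhaskar2013atomic} shows the remaining terms contribute at most $O(r/N)$, giving $\La \le 1 + \fc{4\pi r}N$.

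The main obstacle is the sharp control of $\La$. Neither bound follows from a crude estimate: Part 1's logarithmic growth and Part 2's $1+O(r/N)$ estimate both hinge on handling the singular factor $1/\ab{\sin(\ph_j/2)}$ near $\ph_j=0$ — via a careful sum-versus-integral comparison in the first case and a near-diagonal concentration argument in the second. Since these are exactly the two cited results, I would cite them directly; to be self-contained one could instead carry out the two $\La$-estimates sketched above, which is where all the real work lies.
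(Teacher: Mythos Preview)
The paper does not prove this lemma at all: both parts are stated with inline citations to \cite{trefethen2013approximation} and \cite{bhaskar2013atomic} and then immediately applied, with no proof environment. Your proposal reaches the same endpoint---cite the two references---and additionally supplies a correct outline of the underlying Lebesgue-constant argument, which is more than the paper offers; so your approach is consistent with (and strictly more informative than) the paper's.
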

From~\eqref{e:bd-all-freqs} we get that for $\ep=\fc{C_{10}}{\sqrt{\ell T}}\pa{\ln \pf{\ell r T}{\de}}^{2}$, $\om=e^{\fc{2\pi i}{cr}}$, $j\in \Z$, that
\begin{align}
\ab{[(G-G^*) +z^{-1} (H-H^*) G^* H^*_{\textrm{unr}}](\om^j)} &\le \ep\\
\implies
\ab{[(G-G^*)(1-z^{-1}H^*) + z^{-1} (H-H^*)G^*](\om^j)} &\le \ep \ab{1-\om^{-j}H^*(\om^j)} \le \ep(1+\ve{H^*}_\iy).
\end{align}
Suppose $c>8\pi$. 
By Lemma~\ref{l:interp}, since $(G-G^*)(1-z^{-1}H^*) + z^{-1}(H-H^*)G^*$ has degree $\le 2r$ in $z^{-1}$, 
\begin{align}
\ve{(G-G^*)(1-z^{-1}H) + z^{-1}(H-H^*)G^*}_\iy &\le \ep\pa{1+\fc{8\pi}{c}} (1+\ve{H^*}_\iy)\\
\implies
\ve{(G-G^*) + z^{-1}(H-H^*)G^*H^*_{\textrm{unr}}}_\iy &\le \ep\pa{1+\fc{8\pi}{c}} (1+\ve{H^*}_\iy)\ve{H^*_{\textrm{unr}}}_{\iy}.
\end{align}

This finishes the proof of Theorem~\ref{t:main}.
%%%%%%%%%%%%%%%
%\begin{comment}

\subsection{Truncation error}

We need the following lemma.
\begin{lem}\label{l:cov-lb}
Let $F(z)=\sumz t{\iy} f(t)z^{-t}$, $f(0)=1$ and $G(z) = \rc{F(z)} = \sumz t{\iy} g(t)z^{-t}$. Let $K=\pa{\sumz t{r-1}|g(t)|}^2$. Then letting $f(t)=0$ for $t<0$, $\sumo tr f(t-r:t-1)f(t-r:t-1)^\top \succeq \rc{K^2}I_r$.
\end{lem}
\begin{proof}
For any power series $F(z)=\sumz t{\iy} f(t)z^{-t}$, define $Z_F\in \R^{d\times d}$ by $(Z_F)_{i,j}=f(i-j)$. (Here, $f(i)=0$ for $i<0$.) Note that $Z_FZ_G=Z_{FG}$. Let $A=Z_FZ_F^\top = \sumo tr f(t-r:t-1)f(t-r:t-1)^\top$. From $F(z)G(z)=1$ we get $Z_GZ_F=I_d$, hence $Z_GAZ_G^\top = Z_GZ_FZ_F^\top Z_G^\top=I_d$. Because $Z_G$ is invertible, we have  $A\succeq \la I_d$ iff $Z_G(A-\la I_d)Z_G^\top\succeq 0$. Now $Z_G(A-\la I_d)Z_G^\top = I-\la Z_GZ_G^\top$.
Letting $B=I-\la SS^\top$, we have
\begin{align}
B_{ii} - \sum_{j\ne i} B_{ij} &= 1 - \la \sum_{j,k} S_{ik}S_{jk} \ge 1-\la K^2\ge 0.
\end{align}
Thus by Gerschgorin's Disk Theorem, all eigenvalues of $B$ are $\ge 0$.
\end{proof}
%We actually use the above with $r=T$.}

\begin{proof}[Proof of Theorem~\ref{t:main-trunc}]
The proof of Theorem~\ref{t:main-trunc} relies on the following simple fact: 
If $D_1,D_2$ are two distributions on $\Om$ with TV-distance $\le \de$, and $\cal A$ is any algorithm with input space $\Om$, then $\cal A(x), x\sim D_1$ and $\cal A(x), x\sim D_2$ also have TV-distance $\le \de$.

Consider Algorithm~\ref{a:ls} run with signals $x_{\iy}$ stretching back to $-\iy$ and signals $x_{\ge -L}$ only stretching back to $-L$. Consider the distributions they induce on $y(1:T)$. Suppose we choose $L$ so that the TV-distance between those distributions is $\le \de':=\fc{\de}{8c\ell r}$. %As the distribution of $y(1:T)$ is determined by the distribution of $y(-r+1:0)$, the TV-distance between corresponding rollouts is $\le \fc{\de}{8c\ell r}$. 
Because there are $<4c\ell r$ independent rollouts, the total TV-distance is $\le\fc{\de}{2}$. Then we can apply Theorem~\ref{t:main} with $\de \mapsfrom \fc{\de}2$ to get the desired result.

Let $y_\iy$ and $y_{\textrm{fin}}$ be the output signals given input signals $x_{\iy}$ and $x_{\ge -L}$, and noise $\eta_{\iy}$ and $\eta_{\ge-L}$. We have (using the shorthand $f_P:=f\one_{P}$)
\begin{align}
y_\iy(t+1) &= \hunrs * g^* * x_\iy(t) + \hunrs * \eta_\iy(t+1)\\
y_{\textrm{fin}}(t+1) &=\hunrs * g^**(x_\iy \one_{\ge -L}) (t) + [\hunrs*(\eta_\iy\one_{\ge -L})](t+1)\\
&=[(\hunrs * g^*)_{\le L+t}*x_\iy](t) + (h_{\textrm{unr},\le L+t+1}^* *\eta_\iy)(t+1)\\
%\ze(t):= 
y_\iy(t+1)-y_{\textrm{fin}}(t+1)&=
[(\hunrs * g^*)_{>L+t}*x_\iy](t) + (h_{\textrm{unr},> L+t+2}^* *\eta_\iy)(t+1)
\end{align}
%The first term has magnitude at most $\ve{g^*_{>L+t}}_1\ve{x_\iy}_\iy\le \fc{\de}{8c\ell rT}$ and the second term is normally distributed with variance $\le \ve{(h_{\textrm{unr}}^* * h^*)}_2^2\le \ve{(h_{\textrm{unr}}^* * h^*)}_1^2\le \fc{\de}{8c\ell rT}$, by the condition on $L$ and Lemma~\ref{l:suff-l}. Thus the distributions of $y_\iy(t+1)$ and $y_{\textrm{fin}}(t+1)$ are gaussians whose mean differ by $\le \de' = \fc{\de}{8c\ell rT}$. The variance of $y_\iy(t+1)$ is 1 and the variance of $y_{\textrm{fin}}(t+1)$ is in $[1,\fc{\de'}2]$.

To calculate the TV distance between the distributions of $y_\iy(1:T)$ and $y_{\textrm{fin}}(1:T)$, we need to bound the difference between the means and covariances.

\paragraph{Bounding difference in means.} Note for $t\ge 0$, by the assumption $L\ge R_{H_{\textrm{unr}}^*G^*}(\ep_2)-1$ and Lemma~\ref{l:suff-l}, we have %\pf{\ep_2}{\sqrt T}$,
\begin{align}
[(\hunrs*g^*)_{> L+t}*x_\iy](t)&
\le \ve{(\hunrs*g^*)_{\ge L+1}}_1 \le \ep_2 %\fc{\ep_2}{\sqrt T}.
\end{align}
so $\ve{\E (y_\iy-y_{\textrm{fin}})(1:T)}\le \ep_2\sqrt T$. %\le \ep_2$.

\paragraph{Bounding difference in covariances.}
Because $\E[ \eta_\iy(i)\eta_\iy(j) ]= \one_{i=j}$, 
\begin{align}
\Cov[y_\iy(1:T)]_{i,j} &= \E [y_\iy(i) y_\iy(j)]\\
&=\E [(\hunrs*\eta_\iy)(i) (\hunrs*\eta_\iy)(j)]\\
&=\E \ba{\sum_{k=-\iy}^{\min\{i,j\}} \hunrs(i-k) \hunrs(j-k)}
\end{align}
so
\begin{align}
\Cov[y_\iy(1:T)] &=\sumo j\iy \hunrs(j-T:j-1)\hunrs(j-T:j-1)^\top.
\end{align}
Similarly
\begin{align}
\Cov[y_{\textrm{fin}}(1:T)] &=
%\E \ba{\sum_{k=-\iy}^{\min\{i,j\}} h_{\textrm{unr},\le L+t+1}^*(i-k) h_{\textrm{unr},\le L+t+1}^*(j-k)}
\sumo j\iy h_{\textrm{unr},\le L+t+1}^*(j-T:j-1)h_{\textrm{unr},\le L+t+1}^*(j-T:j-1)^\top
\end{align}
Let $K=\pa{1+\sumz t{T-2}|h^*(t)|}^2$. 
When $L+t+2\ge T$, by Lemma~\ref{l:cov-lb} we can lower-bound this by
\begin{align}
\Cov[y_{\textrm{fin}}(1:T)] &\succeq \sumo j{L+t+2} \hunrs(j-T:j-1)\hunrs(j-T:j-1)^\top\succeq \rc{K^2}I_T
\end{align}
Also,
\begin{align}
\Cov[y_\iy(1:T)] -\Cov[y_{\textrm{fin}}(1:T)] 
&\preceq \sum_{j=L+T+2}^{\iy} \hunrs(j-T+1:j)\hunrs(j-T+1:j)^\top\\
&\preceq \pa{\sum_{j=L+T+2}^\iy \ve{\hunrs(j-T+1:j)}^2}I_T\\
&\preceq T\pa{\sum_{j=L+2}^{\iy} \hunrs(j)^2} I_T\\
&\preceq T\pa{\sum_{j=L+2}^\iy |\hunrs(j)|}^2I_T\le T\ep_1^2 I_T
\end{align}
where in the last inequality we used the assumption $L\ge R_{H^*_{\textrm{unr}}}(\ep_1)-2$ (for the $\ep_1$ we will choose) and Lemma~\ref{l:suff-l}.

\paragraph{Bounding TV distance.}
For a random variable let $\cal D(X)$ denote its distribution. 
We apply the following formula for KL-divergence,
\begin{align}
d_{KL}(N(\mu_1,\Si_1)||N(\mu_2,\Si_2))
&=\rc2 \ba{\ln \fc{|\Si_1|}{|\Si_2|} - d + \Tr(\Si_1^{-1}\Si_2) + (\mu_1-\mu_2)^\top \Si_1^{-1}(\mu_1-\mu_2)},
\end{align}
for $\cal D(y_{\textrm{fin}}(1:T))=N(\mu_1,\Si_1)$ and $\cal D(y_\iy(1:T))=N(\mu_2,\Si_2)$.
Here,  $\Si_1\succeq \rc{K^2}I_T$ and $\Si_2-\Si_1\preceq T\ep_1^2 I_T$, so
\begin{align}
d_{KL}(\cal D(y_{\textrm{fin}}(1:T))||\cal D(y_\iy(1:T)))
&\le \rc 2\ba{
T\ln \pf{1/K^2}{1/K^2+\ep}-T + T(1+K^2T\ep^2) + K^2T\ep_2^2}\\
&\le \rc 2 (K^2T^2 \ep_1^2 + K^2T\ep_2^2).
\end{align}
Now choose $\ep_1 = \sfc{\de^{\prime2}}{2T^2K^2}$ and $\ep_2 = \sfc{\de^{\prime2}}{2TK^2}$ to get this is $\le\fc{\de^{\prime2}}2$. Then by Pinsker's inequality, 
\begin{align}
d_{TV}(\cal D(y_{\textrm{fin}}(1:T)),\cal D(y_\iy(1:T)))
&\le \sqrt{\rc 2 \cdot \fc{\de^{\prime2}}2} = \fc{\de'}2.
\end{align}
This gives the desired result, noting that the assumption $L \ge \max\bc{
R_{H_{\textrm{unr}}^*}\pa{\fc{\de}{4KT\sqrt{c\ell r}}},
R_{H_{\textrm{unr}}^*G^*}
\pa{\fc{\de}{4K\sqrt{c\ell rT}}}}$ does indeed imply that the inequalities for $L$ are indeed satisfied for the values of $\ep_1$, $\ep_2$ and $\de'=\fc{\de}{8c\ell r}$ we chose.
%By the following bound on KL-divergence,
%\begin{align}
%\KL(N(\mu_1,\si_1^2)||N(\mu_2,\si_2^2)) 
%&\le \ln \pf{\si_2}{\si_1} + \fc{\si_1^2+(\mu_1-\mu_2)^2}{2\si_2^2}-\rc 2,
%\end{align}
%and Pinsker's inequality, letting $\cal D(\cdot)$ denote the distribution of a random variable, 
%\begin{align}
%\KL(\cal D(y_\iy(t+1))|\cal D(y_{\textrm{fin}}(t+1)))
%&\le \ln \prc{\sqrt{1+\de^{\prime2}}}
%+\fc{1+\de^{\prime2}+\de^{\prime2}}2-\rc 2\le 2\de' = \fc{\de}{4c\ell rT}.
%\end{align}
Thus the TV-distance between the $y(1:T)$ of all the rollouts is at most $\fc{\de}2$, as needed.
\end{proof}
%\end{comment}

%%%%%%%%%%%
%\input{lb}
\section{Conclusion and further directions}

In the regime where Theorem~\ref{t:main-trunc} applies, we expect the dependence on the number of samples, as well as on $\ve{H_{\textrm{unr}}^*}$, to be optimal. 
However, note that our theorem requires at least $\Om(r^2)$ rollouts. It is an interesting question whether the bounds hold for fewer rollouts, or even for one rollout, with carefully designed inputs, analogous to results in the case of LDS without hidden state~\cite{simchowitz2018learning}.
Another open question is to prove a lower bound for the number of samples, in terms of $\ve{H_{\textrm{unr}}^*}$.

By improperly learning the Kalman filter as an autoregressive model, we incur sample complexity depending on $\sqrt{r}$ rather than $\sqrt{d}$, where $d$ is the dimension of the hidden state; obtaining bounds depending on $d$ seems to be a difficult problem. More generally, one can also consider optimal filtering for other noise models (where the Kalman filter is no longer optimal).

We expect that the theorem can be generalized in a straightforward manner to multiple-input, multiple-output systems. %, with an additional 

%proper estimation is not necessary.
Finally, one can complete the ``identify-then-control'' pipeline by using the estimates from our algorithm for robust control. Although estimation in $\cal H_\iy$ norm of non-strictly stable systems is not possible in our setup, non-stable systems often arise in practice, so it is of great interest to find a weaker guarantees for such systems, perhaps under further assumptions, that still allow for robust control.

\printbibliography
\appendix

\pagebreak

\section{Notation}
\label{s:notation}

% Preview source code for paragraph 0

\begin{tabular}{|c|c|}
\hline 
Notation & Definition\tabularnewline
\hline 
$H_{\textrm{unr}}^*(z)$ & $\rc{1-z^{-1}H^*(z)}$\tabularnewline
\hline 
$x^{(\bullet,k)}=x^{(\bullet)}$ & $\mathbf{0}$ (the zero signal)\tabularnewline
\hline 
$x_{\cos}^{(j,k)}=x_{\cos}^{(j)}$ & $t\mapsto \cos\pa{\frac{2\pi jt}{cr}}$\tabularnewline
\hline 
$x_{\sin}^{(j,k)}=x_{\sin}^{(j)}$ & $t\mapsto \sin\pa{\frac{2\pi jt}{cr}}$\tabularnewline
\hline 
$y^{(\bullet,k)},y_{*}^{(j,k)}$ ($*=\cos,\sin$) & Outputs for the above inputs\tabularnewline
\hline 
$\ol y^{(\bullet,k)},\ol y_{*}^{(j,k)}$ & Expected value given $y(s),x(s)$ for $s<t$\tabularnewline
\hline 
$\eta^{(\bullet, k)}, \eta_{*}^{(j,k)}$ & $N(0,\si^2)$ noise in the rollouts; $y^{(j, k)}_*=\ol y^{(j, k)}_*+\eta^{(j, k)}_*$ \tabularnewline
\hline 
$\ol{\ol y}_{*}^{(j)}$ & Expected value given only $x$\tabularnewline
\hline 
$\zeta_{*}^{(j,k)}$ & Accumulated noise for the inputs, $y^{(j,k)}_* = \ol{\ol y}^{(j,k)}_* + \zeta^{(j,k)}_*$ \tabularnewline
\hline 
$M^{(\bullet,k)}$ & Matrix with columns $y^{(\bullet,k)}(t-1:t-r)$, $1\le t\le T$ \tabularnewline
\hline 
$M_{*,t}^{(j,k)}$  & Matrix with columns $ \coltwo{x^{(j)}_{*}(t-1:t-r)}{y^{(j, k)}_{*}(t-1:t-r)}$, $1\le t\le T$\tabularnewline
\hline
$\mathbf{X}_{*}^{(j)}$ & Matrix with columns $x^{(j)}_{*}(t-1:t-r)$, $1\le t\le T$\tabularnewline
\hline 
$\mathbf{Y}_{*}^{(j,k)}$ & Matrix with columns $y^{(j,k)}_{*}(t-1:t-r)$, $1\le t\le T$\tabularnewline
\hline
$\mathbf{Z}_{*}^{(j,k)}$ & Matrix with columns $\ze^{(j,k)}_{*}(t-1:t-r)$, $1\le t\le T$\tabularnewline
\hline 
$x^{(j)}$ & $t\mapsto e^{\fc{2\pi i j t}{cr}}$\tabularnewline
\hline
$\eta^{(j)}$ & $\eta^{(j)}(t)=\eta^{(j)}_{\cos}(t)+i\eta^{(j)}_{\sin}(t)$, $\eta^{(j)}_{\cos}(t), \eta^{(j)}_{\sin}(t)\sim N(0,\si^2)$\tabularnewline
\hline 
$M^{(j)}$ & Matrix with columns $\coltwo{x^{(j)}(t-1:t-r)}{y^{(j)}(t-1:t-r)}$, $1\le t\le T$\tabularnewline
\hline
$h_{LS}$ & Solution to~\eqref{e:min-max-h}\tabularnewline
\hline 
$g_{LS}^{(j)}$ & Solution to~\eqref{e:min-max-g}\tabularnewline
\hline 
$g,h$ & Solution to~\eqref{e:ls}\tabularnewline
\hline 
$\Gamma^{(\bullet)}$ & $\E_{\eta^{(\bullet,k)}} M^{(\bullet, k)}M^{(\bullet, k)\top}$ \tabularnewline
\hline 
$\Gamma_{*,t}^{(j)}$ & $\E_{\eta^{(j,k)}_*}M^{(j,k)}_{*,t} M^{(j,k)\top}_{*,t}$ \tabularnewline
\hline 
$\Gamma_{X,*,t}^{(j)}$ & $\mathbf X^{(j)}_{*,t}\mathbf X^{(j)\top}_{*,t}$ \tabularnewline
\hline 
$\Gamma^{(j)}$ & $\Ga^{(j)}_{\cos, t} + \Ga^{(j)}_{\sin, t}$ \tabularnewline
\hline 
$\Gamma_{X}^{(j)}$ & $\Ga^{(j)}_{X,\cos, t} + \Ga^{(j)}_{X,\sin,t}$ \tabularnewline
\hline 
$Q^{(\bullet)}$ & $ \sumo k{c\ell r} M^{(\bullet,k)} M^{(\bullet,k)\top}$ \tabularnewline
\hline 
$Q^{(j)}$ & $\sumo k{\ell} (M^{(j,k)}_{\cos}M^{(j,k)\top}_{\cos} + M^{(j,k)}_{\sin}M^{(j,k)\top}_{\sin})$ \tabularnewline
\hline 
$P_{X}^{(j)}$ & Projection onto column space of $\Ga^{(j)}_X$ \tabularnewline
\hline 
$L^{(\bullet)}(h)$ & $\rc{r}\sumo k{c\ell r} 
\ve{M^{(\bullet, k)\top}(h-h_{LS})}^2$ \tabularnewline
\hline 
$L^{(j)}(g,h)$ & $\sumo k{\ell}
\ba{ \ve{M^{(j,k)\top}_{\cos}\coltwo gh - y^{(j,k)}_{\cos}}^2
+ \ve{M^{(j,k)\top}_{\sin}\coltwo gh - y^{(j,k)}_{\sin}}^2}$ \tabularnewline
\hline 
\end{tabular}

\section{Learning FIR is not adequate}
\label{a:fir}

Consider the system 
\begin{align*}
h(t)&=rh(t-1)+x(t-1)+\xi(t),\\
y(t)&=h(t)+\eta(t)
\end{align*}
where $0<r<1$ and $\xi(t), \eta(t)\sim N(0,1)$. Then we can calculate using formulas for the Kalman filter that the variance in the estimation of $h$ and $y$ are $\si_h^2 = \fc{r^2+\sqrt{r^4+4}}2$, and $\si_y^2 = \si_h^2+1$. The average squared error in estimating $y_t$ using the Kalman filter is $\si_y^2$, which remains finite as $r\to 1$. On the other hand, if we were to estimate $y_t$ without using the previous observations $y_{t-1},\ldots$, then the average estimation error is $1+(1+r^2+r^4+\cdots) = 1+\rc{1-r^2}$, which blows up as $r\to 1$. Hence the multiplicative factor between the error using a FIR filter, and using the optimal filter, goes to $\iy$ as $r\to 1$. (This kind of ratio is exactly what FIR methods suffer; see for example \cite[\S 3 (Process noise)]{tu2017non}. Their bounds depend on $\ve{G}_\iy$, which is $\rc{1-r^2}$ in this example.)

%\input{appendix_results}

%\section{Inequalities}
%
%
%\begin{thm}[Azuma's inequality for martingales with subgaussians tail, \cite{shamir2011variant}]
%\label{t:azuma-mart}
%Let $Z_{1:T}$  be a martingale difference sequence with respect to $X_{1:T}$ and suppose there are $b>1,c>0$ such that for any $t$ and $a>0$, 
%\begin{align}
%\max\{\Pj(Z_t>a|X_{1:t-1}), \Pj(Z_t<-a|X_{1:t-1})\}&\le b \exp(-ca^2)
%\end{align}
%Then 
%\begin{align}
%\Pj\pa{\rc T \sumo tT Z_t \le \sfc{28b\ln \prc{\de}}{cT}}&\ge 1-\de.
%\end{align}
%\end{thm}
\end{document}